
\documentclass{article}

\usepackage{microtype}
\usepackage{graphicx}
\usepackage{breqn}
\usepackage{subcaption}
\usepackage{booktabs} 

\usepackage{hyperref}


\usepackage[accepted]{icml2026}



\usepackage{amsmath}
\usepackage{amssymb}
\usepackage{mathtools}
\usepackage{amsthm}

\usepackage{hyperref}
\usepackage{url}
\usepackage{booktabs}       
\usepackage{amsfonts}       
\usepackage{amsmath}
\usepackage{amsthm}
\usepackage{arydshln}
\usepackage{algorithm}
\usepackage{algorithmic}
\usepackage{graphicx}
\usepackage{nicefrac}       
\usepackage{microtype}      
\usepackage[table,xcdraw,dvipsnames]{xcolor}  
\usepackage{listings}
\usepackage{multirow}
\usepackage{nicematrix}
\usepackage{placeins}
\usepackage{lipsum}
\usepackage{wrapfig}
\usepackage{thmtools,thm-restate}

\usepackage[capitalize,noabbrev]{cleveref}

\definecolor{backcolor}{rgb}{0.95,0.95,0.95}
\definecolor{codegreen}{rgb}{0,0.6,0}
\definecolor{codegray}{rgb}{0.5,0.5,0.5}
\definecolor{codepurple}{rgb}{0.58,0,0.82}

\lstdefinestyle{myStyle}{
    backgroundcolor=\color{backcolor},
    basicstyle=\ttfamily\scriptsize,
    commentstyle=\color{codegreen},
    keywordstyle=\color{magenta},
    numberstyle=\tiny\color{codegray},
    stringstyle=\color{codepurple},
    breakatwhitespace=false,         
    breaklines=true,                 
    keepspaces=true,       
    showspaces=false,                
    showstringspaces=false,
    showtabs=false,                  
    tabsize=2,
}
\lstset{style=mystyle}

\setlength{\belowdisplayskip}{5pt} \setlength{\belowdisplayshortskip}{5pt}
\setlength{\abovedisplayskip}{5pt} \setlength{\abovedisplayshortskip}{5pt}

\newcommand{\kw}{YAQA}
\newcommand{\kws}{\kw\ }
\DeclareMathOperator{\tr}{tr}
\newcommand{\snd}{\mathsf{snd}}

\DeclareMathOperator*{\argmin}{arg\,min\ }
\newcommand{\KL}{D_{\text{KL}}}
\newcommand{\R}{\mathbb{R}}

\theoremstyle{plain}
\newtheorem{theorem}{Theorem}[section]

\theoremstyle{definition}
\newtheorem{definition}[theorem]{Definition}

\theoremstyle{remark}

\usepackage[textsize=tiny]{todonotes}

\icmltitlerunning{Model-Preserving Adaptive Rounding}

\begin{document}

\twocolumn[
  \icmltitle{Model-Preserving Adaptive Rounding}



  \icmlsetsymbol{equal}{*}

  \begin{icmlauthorlist}
    \icmlauthor{Albert Tseng}{cornell}
    \icmlauthor{Zhaofeng Sun}{cornell}
    \icmlauthor{Christopher De Sa}{cornell}
  \end{icmlauthorlist}

  \icmlaffiliation{cornell}{Department of Computer Science, Cornell University}

  \icmlcorrespondingauthor{Albert Tseng}{albert@cs.cornell.edu}

  \icmlkeywords{Machine Learning, ICML, Quantization, Compression, Hessian}

  \vskip 0.3in
]



\printAffiliationsAndNotice{}  

\begin{abstract}
The goal of LLM quantization is to produce a compressed model whose output distribution is as close to the original model's as possible. 
To do this tractably, most quantization algorithms minimize the immediate activation error of each layer as a proxy for the end-to-end error.
However, this ignores the effect of future layers, making it a poor proxy.
In this work, we introduce Yet Another Quantization Algorithm (\kw), a new adaptive rounding algorithm that directly considers the error at the network's output.
\kws introduces a series of theoretical results that culminate in the first end-to-end error bounds for quantization algorithms.
First, we characterize the convergence time of adaptive rounding algorithms via the structure of their Hessian approximations.
We then show that the end-to-end error can be bounded by the approximation's cosine similarity to the true Hessian.
This admits a natural Kronecker-factored approximation with corresponding near-optimal Hessian sketches.
\kws is provably better than GPTQ/LDLQ and empirically reduces the KL divergence by $\approx 30\%$ over these methods.
\kws even achieves a lower KL than quantization aware training.
This translates to state of the art performance on downstream tasks, all while adding no inference overhead.
\end{abstract}

\section{Introduction}

Modern large language models (LLMs) contain billions of parameters, posing challenges for efficient deployment \citep{gemma3, tsengl3}.
One way to improve the cost-benefit tradeoffs of LLM inference is by quantizing high precision model parameters $\theta^*$ to low precision representations.
In memory-bound settings, this reduces bandwidth, and in compute-bound settings, hardware-supported datatypes increase throughput.
At their core, quantization frameworks (e.g. QuIP\#) consist of two components: a quantizer $\mathcal{Q}$ (e.g. E8P) that defines a set of representable points $C$, and a rounding algorithm (e.g. LDLQ) that assigns $\theta^* \to C$.
While quantizers are restricted by inference constraints, rounding algorithms give flexibility in the final model.

The goal of a rounding algorithm is to best preserve the original model outputs:
\begin{equation}
\label{eqn:e2emin}
\hat \theta \gets \argmin_{\theta \in C} \; \mathbb{E}_{X \sim \mathcal{D}} \left[ \KL(M(\theta^*, X) \| M(\theta, X)) \right],
\end{equation}
where $M$ is a model architecture, $M(\theta^*, X)$ are the original model outputs, $M(\theta, X)$ are the quantized model outputs, and $X$ is a input example sampled from a data distribution $\mathcal{D}$.
Exactly solving Equation \ref{eqn:e2emin} is intractable due to the combinatorial nature of discrete optimization.
Instead, modern quantization frameworks either optimize it with first-order descent-style methods such as quantization aware training (QAT), or second-order adaptive rounding algorithms such as GPTQ \citep{gemma3,gptq}.
The former requires significant amounts of compute to ``learn'' quantized representations, and the latter algorithmically rounds individual layers with second-order statistics.

To round a layer with second-order statistics, let $\mathcal{L}(W)$ denote the loss of Equation \ref{eqn:e2emin} resulting from changing that layer's weights to $W$, leaving all other weights alone with their value in $\theta^*$. 
Then, 
\begin{equation}
\label{eqn:layerSO}
\mathcal{L}(W) \approx \frac{1}{2} \operatorname{vec}(W - W^*)^T (\nabla^2 \mathcal{L}(W^*)) \operatorname{vec}(W - W^*),
\end{equation}
where $W^* \in \mathbb{R}^{m \times n}$ are the original weights and first order terms involving $\nabla \mathcal{L}(W^*)$ are 0 since $\KL$ is minimized at the original model. 
Note that this is just a standard second order approximation of the end-to-end loss for a single layer.
Since $\nabla^2 \mathcal{L}(W^*) \in \mathbb{R}^{mn \times mn}$ is too large to directly operate on, modern adaptive rounding algorithms use information from structured approximations (colloquially ``Hessians'') of $\nabla^2 \mathcal{L}(W^*)$ to round elements of $W^*$.

For example, the widely used GPTQ algorithm rounds using the Hessian of the immediate layerwise activation error as a proxy for $\nabla^2 \mathcal{L}(W^*)$ \citep{gptq}. 
Like with $\mathcal{L}$, the minimum of this objective is achieved at the original model, but since this ignores the effect of quantization on future layers, it is not a very good proxy.
Newer methods such as GuidedQuant and SqueezeLLM have explored using block diagonal approximations of the empirical Fisher to incorporate additional information from $\nabla^2 \mathcal{L}(W^*)$ \cite{guidedquant,squeezellm}.
Unfortunately, these structured approximations are largely ad-hoc, with little theoretical justification for either the structure or approximation used.
Resultingly, they underperform \kw.

This begs the question: \textit{what fundamental properties of Hessian sketches admit tractable adaptive rounding algorithms that produce high quality models?}
In this work, we answer with Yet Another Quantization Algorithm (\kw), an new adaptive rounding algorithm targeted for weight-only LLM post-training quantization that, \textit{for the first time ever for any quantization algorithm, comes with bounds on the end-to-end quantization error.}
\kws generalizes LDLQ to consider the full model error, and is provably better at minimizing the KL while having the same asymptotic cost.

\kws relies on a few key theoretical insights.
We first characterize the convergence time of LDLQ and show that for it to be tractable, the Hessian approximation must have certain structural properties.
This gives rise to a natural Kronecker-factored Hessian approximation that admits symmetric input and output-side feedback during rounding. 
Then, we show that the KL of \kw's adaptive rounding algorithm can be bounded by the cosine similarity between its Hessian and $\nabla^2 \mathcal{L}(W^*)$.
This motivates \kw's Hessian sketches, which power iterate on $\nabla^2 \mathcal{L}(W^*)$ to obtain near-optimal Hessians.
Empirically, \kws reduces the KL by $\approx 30\%$ over LDLQ, achieves a lower KL than QAT, and sets a new state of the art in PTQ quality.
In summary, we:

\begin{enumerate}
\item Introduce \kw, a new quantization algorithm that generalizes local adaptive rounding methods to optimize the end-to-end KL and is provably better than GPTQ and LDLQ.
\item Characterize desiderata for Hessian sketches that admit tractable adaptive rounding algorithms and high quality quantized models, and prove the first ever theoretical bounds on the end-to-end quantization error.
\item Show that \kws achieves a significantly lower KL than existing rounding algorithms, including QAT, and achieves state of the art downstream results.
\end{enumerate}

\section{Background}

\label{sec:bg}


\subsection{Large Language Model Quantization}

LLM quantization approaches largely fall into two categories: post-training quantization (PTQ) and quantization aware training (QAT).
In PTQ, models are compressed after training, whereas QAT and low-precision training methods produce \textit{natively quantized models} through modified training recipes \citep{nagelqat,fp4}.
Although PTQ has achieved popularity due to its combination of strong quality and relatively low cost, QAT can produce better models at the expense of needing significantly more compute.
However, PTQ and QAT are not completely orthogonal.
Recently, a class of PTQ approaches has emerged where weights are ``learned'' over a restricted subspace.
For example, DiscQuant performs descent on the full model KL but limits parameters to either be rounded up or down, and PV-Tuning only optimizes a select subset of parameters at any given step \citep{discquant, pvtuning}.





\subsubsection{Local Layerwise Rounding}

Most quantization algorithms use $H_1$, the Hessian of the immediate activation error ($\|x(W-W^*)^T\|_F^2$), as an approximation of $\nabla^2 \mathcal{L}(W^*)$.
$H_1 \in \mathbb{R}^{n\times n} = \mathbb{E}[x^Tx]$, where $x\in\mathbb{R}^{1\times n}$ is the input activation to that layer and the expectation is taken over some data distribution.
$H_1$ can be computed with only forward passes, making it cheap to obtain.
In GPTQ and LDLQ, which are equivalent, columns in $W^*$ are iteratively rounded with linear feedback from the Cholesky decomposition of $H_1$.
For the rest of this paper, we use LDLQ's presentation from \cite{quip} to refer to both.
In AWQ, input channels are weighted with the diagonal of $H_1$ \citep{awq}.
\citet{quip} showed that the error of LDLQ could be bounded by an increasing function of $H_1$'s incoherence, which intuitively measures how uniform important rounding directions are.

\begin{definition}[\cite{quip}]
A Hessian $H \in \mathbb{R}^{n \times n}$ is $\mu$-incoherent if its eigendecomposition $H = Q \Lambda Q^T$ has $\max_{i,j} \; |Q_{ij}| = \max_{i,j} \; |e_i^T Q e_j| \leq \mu / \sqrt{n}$. 
\label{def:inc}
\end{definition}

To reduce $\mu$ for $H_1$, \citet{quip} introduced \textit{incoherence processing}, which concentrates $H_1$ and $W^*$ by fast random orthogonal transformations. 
In QuIP\#, \citet{qs} did this with the randomized Hadamard transform (RHT), which also makes $W$ approximately Gaussian. 


\subsubsection{Full Model Adaptive Rounding}
\label{sec:fmar}
Since $H_1$ ignores the effect of future layers on quantization, quantizing with $H_1$ does not always reduce the KL.
Methods such as OBS, AdaRound, SqueezeLLM, and GuidedQuant have explored using additional information from the \textit{empirical} Fisher matrix on the task loss (next-token cross entropy), which is fundamentally different from $\nabla^2 \mathcal{L}(W^*)$ \citep{obs,adaround,squeezellm,guidedquant,fisherlimit}.
These methods assume the model is trained to convergence, which is almost never true for LLMs \citep{chinchilla}, and use block diagonal approximations to maintain tractability.
Although these methods can outperform LDLQ, they do not generally come with error bounds, making it difficult to reason about their characteristics. 
Furthermore, their Hessian sketches are largely ad-hoc, making it unclear if the resulting approximation is good or not.
For example, increasing the fidelity of GuidedQuant's Hessian approximation does not consistently reduce the error.
In contrast, as we show in Section \ref{sec:adaround}, \kw's KL is bounded by the cosine similarity between its Hessian and $\nabla^2 \mathcal{L}(W^*)$, which \kws optimizes by construction.


\subsubsection{Gradient-Descent-Based Methods}
In all the aforementioned methods, quantized representations are obtained without ``learning'' and are not updated once they are obtained.
In contrast, certain algorithms perform what is essentially constrained QAT with a much smaller compute budget.
In PV-Tuning, learnable codebooks and code assignments are jointly optimized to minimize the end-to-end KL \citep{pvtuning}.
In DiscQuant, model weights are updated with gradient descent in a constrained subspace \citep{discquant}.
In CBQ, quantized weights are learned with LoRA adapters and regularization in an AdaRound-style setup \citep{cbq}.
In general, these methods are restricted to suboptimal scalar or unstructured vector quantizers.
Finally, although some of these methods are guaranteed to perform local descent, QAT in general is not a descent method.

\subsection{Hessian Approximations}

While quantization works have mostly used $H_1$ or block diagonal Hessian approximations, prior \textit{optimization} works have proposed a wide variety of Kronecker-factored Hessian sketches ($H \approx H_O \otimes H_I$) in the context of learning algorithms.
Most sketches are based off the Fisher Information Matrix $\mathbb{E}[\text{vec}(\nabla_{W^*}\ell)\text{vec}(\nabla_{W^*}\ell)^T]$.
In the real Fisher, which is equal to $\nabla^2 \mathcal{L}(W^*)$ \citep{statbook}, $\ell$ is computed with a Monte-Carlo sample over the model output logits. 
In the empirical Fisher, $\ell$ is the next token cross entropy.
In KFAC, the authors show that for linear layers in MLPs, $H = \mathbb{E}[x^T x \otimes (\nabla_y\ell)^T(\nabla_y\ell)]$, which gives the approximation $H_I = \mathbb{E}[x^Tx]$ and $H_O = \mathbb{E}[(\nabla_y\ell)^T(\nabla_y\ell)]$ \citep{kfac}.
In Shampoo, the authors propose approximating the empirical Fisher with $H_I = \mathbb{E}[(\nabla_{W^*}\ell)^T(\nabla_{W^*}\ell)], H_O = \mathbb{E}[(\nabla_{W^*}\ell)(\nabla_{W^*}\ell)^T]$ \citep{shampoo}.
Finally, there exist ``eigencorrected'' versions of these approximations \citep{ekfac, soap}, which are beyond this discussion.
\section{Yet Another Quantization Algorithm}

Here, we describe \kw, a layerwise adaptive rounding method that rounds layers to minimize the full model KL divergence. 
\kws consists of two components: 1) a theoretically principled rounding algorithm that generalizes LDLQ to consider the full model error, and 2) a series of near-optimal Hessian sketches for the rounding algorithm that can be tractably computed for large modern LLMs.
Since \kws only chooses a representation within a quantized space, \kws can be used with any quantizer.
This means that \kws does not affect the inference efficiency of the quantized model, which is determined by the quantizer.

\subsection{End-to-End Layerwise Adaptive Rounding}
\label{sec:adaround}

In the state-of-the-art local layerwise adaptive rounding algorithm LDLQ, each linear layer weight matrix $W^* \in \mathbb{R}^{m \times n}$ is independently rounded to produce $W$ with the following fixed point iteration:
\begin{equation}
\label{eqn:ldlq}
W = \mathcal{Q}\left(W^* + (W^*-W)(L_1-I)\right),
\end{equation}
where $L_1$ is the triangular component of the LDL decomposition of $H_1$ and specifies linear feedback along the input channels of $W^*$.
Here, $\mathcal{Q}$ is a scalar quantizer; we detail the vector quantization case in the Appendix.
LDLQ acts to minimize the loss $\tr((W^* - W) H_1 (W^* - W)^T)$, which is a proxy for the end-to-end error $\mathcal{L}(W)$ in Equation \ref{eqn:layerSO}. 

We wish to generalize this to other Hessian sketches that give better quadratic proxy losses for $\mathcal{L}(W)$. 
Consider a Hessian sketch $\tilde H \in \R^{mn \times mn}$ with LDL decomposition $\tilde H = L D L^T$ and the fixed-point iteration
\begin{equation}
\label{eqn:ldlq_general}
W = \operatorname{vec}^{-1}\left( \mathcal{Q}\left(\operatorname{vec}(W^*) + \operatorname{vec}(W^*-W)(L-I)\right) \right),
\end{equation}
where $\operatorname{vec}^{-1}$ returns to the original shape.
This update, which requires $\mathcal{O}(m^2 n^2)$ operations as written, is obviously intractable: we are interested in characterizing when the \textit{structure} of $\tilde H$ allows adaptive rounding to be efficient.


\subsubsection{When is Adaptive Rounding Fast?}

For Equation~\ref{eqn:ldlq_general} to be fast, we need two properties to hold.
First, $L$ must admit fast matrix-vector multiplication in $o(m^2n^2)$ time.
Second, the fixed-point iteration should terminate in a small number of steps.
Structured matrices that satisfy the former have been well-studied \citep{matvec}, so we focus on the latter.
The key property we need is a metric we call the ``structural nilpotence degree'' (SND):


\begin{definition}[SND]
\label{def:snd}
Let $L$ be a unit triangular matrix. 
$\mathsf{snd}(L)$ is the degree of the binary nilpotent matrix $N$ with the same support (nonzero mask) as $L-I$.
\end{definition}
The SND bounds the number of steps Equation \ref{eqn:ldlq_general} converges in, which follows since $N = L - I$ is the adjacency matrix of the dependency graph of Equation \ref{eqn:ldlq_general}'s update.
\begin{restatable}{lemma}{ldlqsnd}
\label{lem:ldlqsnd}
Equation \ref{eqn:ldlq} converges after $\le\snd(L)$ steps.
\end{restatable}
It is straightforward to see that for a general dense lower triangular $L \in \mathbb{R}^{mn\times mn}$, $\snd(L) = mn$, which matches our earlier analysis of running LDLQ on $\tilde H$ and $\text{vec}(W^*)$.
Lemma \ref{lem:ldlqsnd} reduces the problem of finding a $\tilde H$ that makes LDLQ tractable to finding one with fast matrix-vector multiplication and low SND. This next lemma helps us do that.
\vspace{-0.2cm}
\begin{restatable}{lemma}{kronsnd}
\label{lem:kronsnd}
Let $L_1, L_2$ be unit triangular matrices. Then $\snd(L_1 \otimes L_2)$ = $\snd(L_1) + \snd(L_2)$.
\end{restatable}
This result motivates us to use a Kronecker-factored approximation $\tilde H = H_O \otimes H_I$, for $H_O \in \mathbb{R}^{m \times m}$ and $H_I \in \mathbb{R}^{n \times n}$, in \kw.
Since the LDL decomposition of this $\tilde H$ is $H_O \otimes H_I = (L_O \otimes L_I)(D_O \otimes D_I)(L_O \otimes L_I)^T$, where $L_OD_OL_O^T$ is the LDL decomposition of $H_O$ and likewise for $H_I$, it follows that for $L = L_O \otimes L_I$, $\snd(L) = \snd(L_O) + \snd(L_I) \le m+n-1$.
We can also see that this admits fast multiplication, as running Equation~\ref{eqn:ldlq_general} on this $\tilde H$ (hereafter called ``\kw'') is equivalent to
\begin{align}
\label{eqn:iterroundfp}
W &= \operatorname{vec}^{-1}\left( \mathcal{Q}(\operatorname{vec}(W^*) + \operatorname{vec}(\Delta) (L_O \otimes L_I - I)) \right)\\
&= \mathcal{Q}(W^* + {L_O'}^T\Delta L_I' + {L_O'}^T\Delta + \Delta L_I'), 
\end{align}
where $L_O' = L_O-I$, $L_I' = L_I - I$, and $\Delta = W^*-W$. This means ``\kw'' can run in $m+n$ iterative steps, each of which are small highly parallelizable matmuls.

Compared to LDLQ, Equation \ref{eqn:iterroundfp} has two additional ``output side'' feedback components: $L_O^T \Delta$ and $L_O^T \Delta L_I$.
This is conceptually nice since the feedback is symmetric across input and output channels.
Empirically, although quantization time is negligible in practice, \kws also takes roughly twice as long as LDLQ (SND$=n$) as predicted by their SNDs.
In contrast, GuidedQuant, which can also be expressed in our SND framework as LDLQ on a block-diagonal approximation (see Appendix), uses no output side feedback regardless of the number of blocks.
Indeed, GuidedQuant's performance plateaus after just 4 blocks.

\subsubsection{End-to-End Bounds \& Hessian Desiderata}


Like LDLQ, we can bound the proxy error of \kws $\operatorname{vec}(\Delta) \tilde H \operatorname{vec}(\Delta)^T = \tr(\Delta^T H_O \Delta H_I)$ (see Appendix). 
However, we can also bound the \textit{``true second-order error''} $\operatorname{vec}(\Delta) H \operatorname{vec}(\Delta)^T$, for the ``true Hessian'' $H = \nabla^2 \mathcal{L}(W^*)$, by the proxy error and cosine similarity between $H_O \otimes H_I$ and $H$, which gives \textit{the first end-to-end error bound for any quantization algorithm}.

\begin{restatable}{theorem}{realbound}
Let $H\in\mathbb{R}^{mn \times m n} = \nabla^2\mathcal{L}(W^*)$ be the Hessian of a linear layer $W$ with respect to the KL to the original model outputs, $H_O \in \mathbb{R}^{m\times m}$ and $H_L \in \mathbb{R}^{n \times n}$ be two p.d. matrices, and $\mathcal{Q}$ perform nearest or stoch. rounding with $\mathbb{E}[(\mathcal{Q}(x) - x)^2] \leq \sigma^2$. 
Furthermore, let $W$ be the output of Equation \ref{eqn:iterroundfp} with $L_O', L_I'$ from the LDL decompositions of $H_O, H_I$, respectively.
Then,
\begin{align*}
\operatorname{vec}(\Delta) H \operatorname{vec}(\Delta)^T \le \|H\|_F \Big( \|\Delta\|_F^2 \sqrt{2-2c} + \\ \frac{\mu_I^2 \mu_O^2}{mn\|H_I\|_F\|H_O\|_F} \tr(H_I^{1/2})^2 \tr(H_O^{1/2})^2 \sigma^2 \Big)
\end{align*}
where $c = \frac{\langle H, H_O \otimes H_I \rangle}{\|H\|_F \|H_O\|_F \|H_I\|_F}$ is the cosine similarity between $H$ and $H_O \otimes H_I$.
\label{thm:realbound}
\end{restatable}

Theorem \ref{thm:realbound} states two things.
First, the closer $H_O \otimes H_I$ is directionally to $H$, the better \kws minimizes Equation \ref{eqn:layerSO}.
This intuitively makes sense since $H$ captures the important rounding directions -- having an approximation that is directionally similar to $H$ should give a better quantized model.
Second, $H_O$ and $H_I$ should both have low incoherence and, as we will show, be approximately low rank.
The former follows directly from $\mu_O$ and $\mu_I$, and the latter follows from the fact that regular LDLQ is equivalent to \kws with $H_O=I$ and $H_I = H_1$.

With this equivalence, the ratio between the ``trace part'' of the bounds for \kws and LDLQ is 
\begin{equation}
\label{eqn:onesidebound}
\frac{\mu_O^2 \mu_I^2 \tr(H_I^{1/2})^2 \|H_1\| \tr(H_O^{1/2})^2}{m \sqrt{m}\mu_1^2 \tr(H_1^{1/2})^2 \|H_I\| \|H_O\|}. 
\end{equation}
From Cauchy-Schwarz, $\tr(H_O^{1/2})^2 \le k_O \tr(H_O)$, so if $H_O$ has rank $k_O \le \frac{m\mu_1^2\tr(H_1^{1/2})^2\|H_I\|}{\mu_O^2\mu_I^2\tr(H_I^{1/2})^2\|H_1\|}$ then Equation \ref{eqn:onesidebound} $\le 1$.
When $H_O$ is low rank, which is empirically true, \kws achieves a lower error bound than LDLQ. 


These properties give us a clear objective for constructing a good Kronecker-factored Hessian sketch for \kw: we wish to maximize $c$ and minimize the incoherences and ranks of $H_O, H_I$.
Although we cannot easily control the rank of either factor, we \textit{can} maximize $c$ with power iteration. 
Specifically, the Kronecker product is a reshaped rank-1 product \citep{vanloan} so we can find $H_O$ and $H_I$ by power iterating on $H$. 
In the following section, we describe how to tractably do so for modern LLMs.

\subsection{Scalable Near-Optimal Kronecker Factored Hessian Sketches}

\begin{figure*}[t]
\centering
\includegraphics[width=0.3\linewidth]{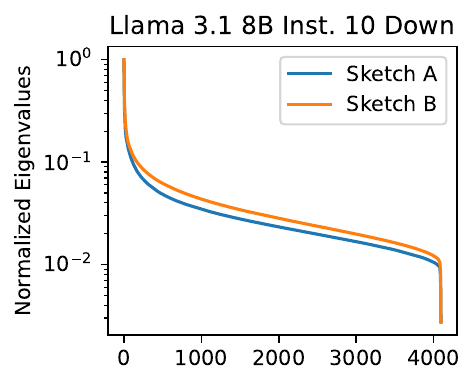}
\includegraphics[width=0.3\linewidth]{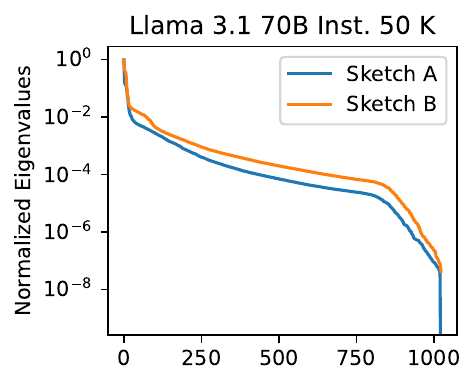}
\includegraphics[width=0.3\linewidth]{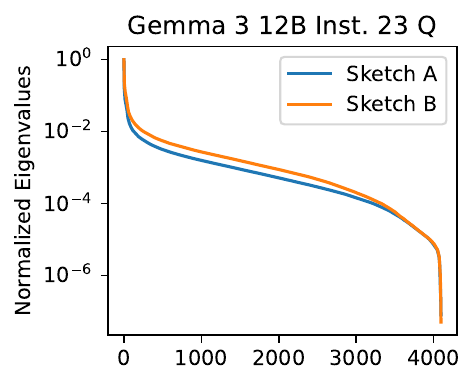}
\vspace{-0.2cm}
\caption{Real-world $H_O$'s from A and B are approximately low rank and have similar spectrums.}
\vspace{-0.3cm}
\label{fig:lowrank}
\end{figure*}

\label{sec:kronhess}

We wish to find $H_I, H_O = \argmin_{H_I, H_O} \|H - H_O \otimes H_I\|_F^2$.
This can be done with power iteration:
\begin{align}
(H_I)_{i} &\gets H (H_O)_{i-1}/\|(H_O)_{i-1}\|_F^2 \\
(H_O)_{i} &\gets (H_I)_{i-1} H/\|(H_I)_{i-1}\|_F^2.
\label{eqn:powitorig}
\end{align}
which is guaranteed to converge to the optimal $H_O, H_I$.
However, power iterating on $H$ is practically difficult.
Recall that $H$ is equal to the real Fisher Information Matrix, or $\mathbb{E}[\text{vec}(\nabla_{W^*}\ell) \text{vec}(\nabla_{W^*}\ell)^T]$, where the expectation is taken over independent samples and $\ell$ is the cross entropy with a Monte-Carlo sample from the output logits.
In LLMs, tokens within a sequence are not independent due to sequence mixing (e.g. with attention).
As such, $H$ must be computed over \textit{entire sequences and not individual tokens}, which increases the variance of the estimate.




To remedy this, we propose two different Hessian sketches that both have high cosine similarity to $H$ while being tractable.
Sketch A assumes tokens are independent within a linear layer, which reduces the variance to $O(1/\#\text{tokens})$ at the cost of a slightly biased estimate. 
Sketch B runs one round of power iteration on $\nabla^2 \mathcal{L}(W^*)$ starting from an identity initialization, which can be computed in a single pass over a dataset.
This lets us use enough sequences to achieve low variance without blowing up cost.
We show that given sufficient data, Sketch B both theoretically and empirically outperforms Sketch A.
However, both are still significantly better than prior state-of-the-art rounding algorithms.


\subsubsection{Hessian Sketch A}
\label{sec:sketchA}
Sketch A power iterates on a Hessian estimate that assumes that tokens within a sequence are independent.
In this setting, $(\nabla^2 \mathcal{L}(W^*))_A = \mathbb{E}_{x\sim D}\left[x^Tx \otimes (\nabla_y\ell)^T(\nabla_y\ell)\right]$,
where $\ell$ is computed with the same Monte-Carlo sample as before but each $x, y$ pair corresponds to an \textit{individual token}.
This sketch is obviously biased but reduces the variance by increasing the sample size to the number of tokens.
Furthermore, $(\nabla^2 \mathcal{L}(W^*))_A$ still has sequence information from $(\nabla_y\ell)$, so it is still empirically close to $\nabla^2\mathcal{L}(W^*)$.
To power iterate on $(\nabla^2 \mathcal{L}(W^*))_A$, we compute
\begin{align}
\label{eqn:powit}
(H_I)_i &\gets \frac{\mathbb{E}_{x \sim D}\left[x^Tx\left\langle(H_O)_{i-1}, (\nabla_y\ell)^T(\nabla_y\ell)\right\rangle\right]}{\|(H_O)_{i-1}\|_F^2} \\
\label{eqn:powit2}
(H_O)_i &\gets \frac{\mathbb{E}_{x \sim D}\left[(\nabla_y\ell)^T(\nabla_y\ell)\left\langle(H_I)_{i-1}, x^Tx\right\rangle\right]}{\|(H_I)_{i-1}\|_F^2}.
\end{align}
Equations \ref{eqn:powit} and \ref{eqn:powit2} only require the input $x$ and error signal $\frac{d\ell}{dy}$, so we can use a modified backward pass to perform fully-distributed power iteration.
To speed up convergence, we initialize $H_O, H_I$ with the LDLQ Hessian: $(H_I)_0 \gets H_1, (H_O)_0 \gets I$.
Empirically, Sketch A converges in $\le 3$ full iterations and takes around 20 GPU-hours for a 10B parameter model and 20M token dataset, although it is possible to significantly fewer sequences without affecting quality.

\subsubsection{Hessian Sketch B}

Sketch B directly computes the result of one round of power iteration on $\nabla^2 \mathcal{L}(W^*)$ starting with $H_I, H_O \gets I$.
Power iterating on $\nabla^2 \mathcal{L}(W^*)$ computes the following updates
\begin{align}
\label{eqn:powitfisher}
(H_I)_{i} &\gets \frac{\mathbb{E}_{s \sim D}\left[(\nabla_{W^*}\ell)^T (H_O)_{i-1} (\nabla_{W^*}\ell)\right]}{\|(H_O)_{i-1}\|_F^2} \\
(H_O)_{i} &\gets \frac{\mathbb{E}_{s \sim D}\left[(\nabla_{W^*}\ell) (H_I)_{i-1} (\nabla_{W^*}\ell)^T\right]}{\|(H_I)_{i-1}\|_F^2}.
\end{align}
If $(H_I)_0$ and $(H_O)_0$ are both $I$, then $(H_I)_1 = \mathbb{E}_{s \sim D}\left[(\nabla_{W^*}\ell)^T (\nabla_{W^*}\ell)\right] / m$ and $(H_O)_1 = \mathbb{E}_{s \sim D}\left[(\nabla_{W^*}\ell)(\nabla_{W^*}\ell)^T\right] / n$, where the expectation and $\nabla_{W^*}\ell$ are computed over sequences.
Both of these are computable in the same backward pass, letting us do one round of power iteration on both $H_O$ and $H_I$ in a \textit{single pass over a dataset}.
This sketch is conceptually similar to the preconditioning basis in Shampoo \citep{shampoo, shampoobasis}, except that we compute the gradient per-sequence instead of per-batch and use the real Fisher instead of the empirical Fisher.
Sketch B takes around 30 GPU-hours for a 10B model and 64K 2K-token sequences.
Again, like with Sketch A, it is possible to use far fewer sequences (as few as 2K, or 1 GPU-hour) while still achieving state of the art results (see Appendix).

\subsubsection{Evaluating Sketch A and B}
\begin{figure}[t]
\centering
\includegraphics[width=0.7\linewidth]{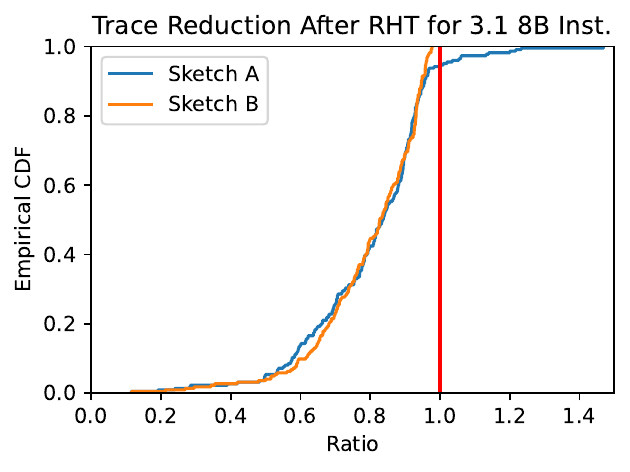}\\
\includegraphics[width=0.7\linewidth]{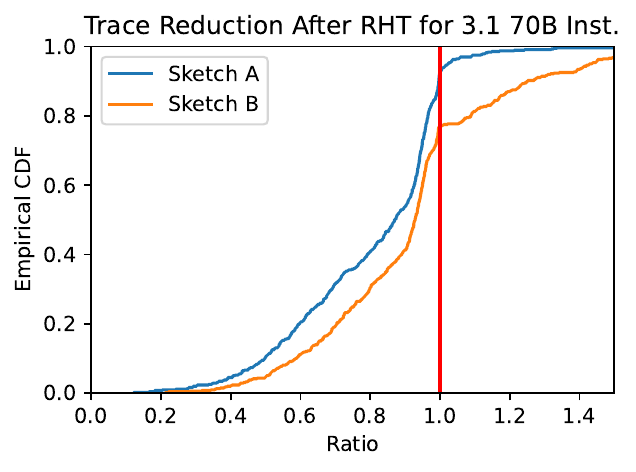}
\vspace{-0.2cm}
\caption{Empirical CDF of $\frac{\tr(D_O^{\text{IP}})\tr(D_I^{\text{IP}})}{\tr(D_O)\tr(D_I)}$ across linear layers, where $D^{\text{IP}}$ denotes $D$ after incoherence processing $H$ with the RHT. A ratio of $<1$ indicates a reduction in error bound.}
\label{fig:ip}
\end{figure}
\label{sec:abcomp}
In Section \ref{sec:adaround}, we desired to find a Kronecker-factored sketch with low rank, low incoherence, and high cosine similarity to the true Hessian.
Here, we evaluate how well Sketch A and B achieve these goals.
Figure \ref{fig:lowrank} shows that empirically, real-world $H_O$ matrices have strong spectral decay and are approximately low rank.
Figure \ref{fig:ip} shows the empirical CDF across layers of the ratio $\frac{\tr(D_O^{\text{IP}})\tr(D_I^{\text{IP}})}{\tr(D_O)\tr(D_I)}$, where $D_I^{\text{IP}}$ denotes $D_I$ after incoherence processing $H_I$ with the RHT and likewise for $D_O^{\text{IP}}$ and $H_O$.
By applying IP, we can bound the incoherences of $H_O$ and $H_I$, which empirically translates to a lower trace bound.

Figure \ref{fig:empcos} L and C contain plots of the ``normalized'' cosine similarity of A, B, and $I \otimes H_1$ (LDLQ's Hessian).
Here, the normalized cosine similarity denotes $\frac{\langle H, H_O \otimes H_I \rangle}{\|H_O\|\|H_I\|}$, scaled so the largest data point is 1.0. 
This avoids computing $\|H\|$, which is unnecessary.
The cosine similarities of A and B are both much higher than $I \otimes H_1$, with A being slightly lower than B.
In certain data-limited cases, such as for layer 24 Down in Llama 3.1 8B Instruct, A has a higher cosine similarity than B.
Here, the bias of A is less than the variance of B.
Finally, Figure \ref{fig:empcos} R shows that A and B have high cosine similarities to each other, with the pre-attention score layers (Q, K) having higher similarity than the post-attention score layers. 
This correlates with our construction of A, which ignores sequence mixing.
Indeed, across all desiderata, \kws is provably better than LDLQ.

\begin{figure*}[t]
\centering
\includegraphics[height=1.7in]{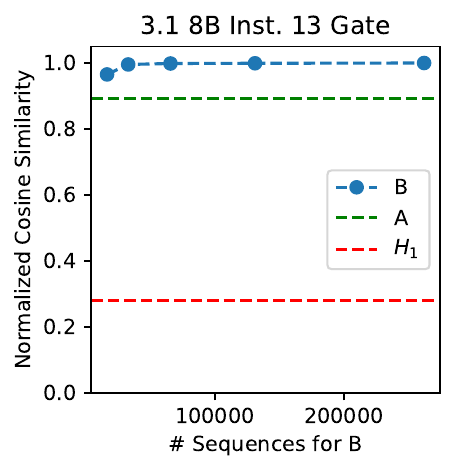}\hspace{0.3cm}
\includegraphics[height=1.7in]{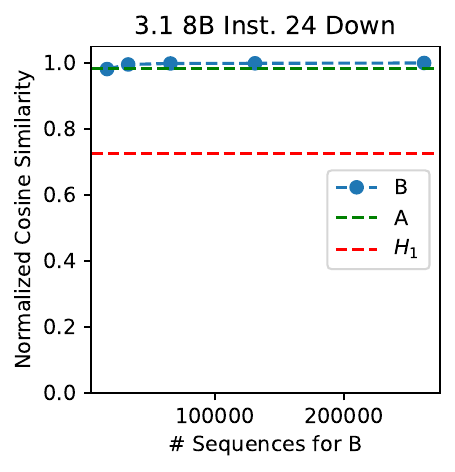}\hspace{0.3cm}
\includegraphics[height=1.7in]{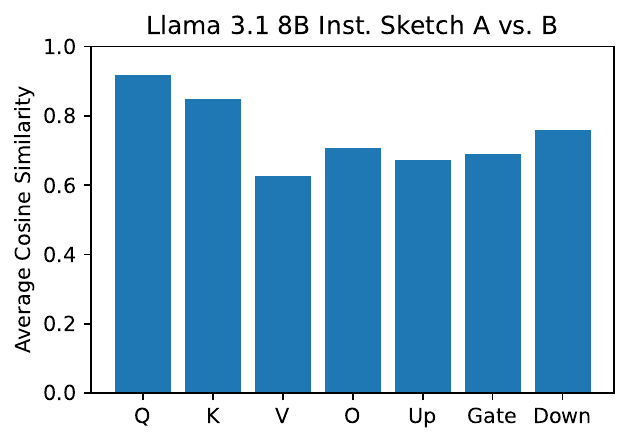}
\vspace{-0.2cm}
\caption{(L, C) Normalized cosine similarity of $I \otimes H_1$, A, and different sequence counts for B, calculated against $H$. A and B are both much closer to $H$ than $I \otimes H_1$. (R) Average cosine similarity between A and B, grouped by linear layer.}
\vspace{-0.3cm}
\label{fig:empcos}
\end{figure*}

\section{Experiments}

To test \kw, we quantized Llama, Gemma, and Qwen models and evaluated both downstream performance (perplexity, zeroshot accuracy) and the KL divergence to the original model outputs.
Llama and Gemma results can be found here in the main body and Qwen results are in the Appendix.
The KL considers the entire distribution over the output space, whereas perplexity and zeroshot accuracy only consider the mass of a single ``ground truth'' token (see Appendix). 
Since our goal is to produce a model as close to the original model as possible and almost all real-world use cases of LLMs involve sampling from the entire distribution, the KL is better at measuring how close models are.

\textbf{Baselines and Setup}
The focus of \kws is on the Hessian estimate and weight rounding algorithm, so our main experiments perform \textit{weight-only} PTQ without any additional finetuning.
However, we include experiments with recovery finetuning to show that \kws composes with finetuning.
For \kw-A, we use a context length of 8K, 20M tokens, and 3 full rounds of power iteration.
For \kw-B, we use a context length of 2K and 64K sequences.

In the following sections, we compare against LDLQ \citep{quip}, GuidedQuant \citep{guidedquant}, DiscQuant \citep{discquant}, and Google's Gemma 3 QAT recipe \citep{gemma3}, which represent state-of-the-art algorithms that perform local adaptive rounding, end-to-end adaptive rounding, constrained gradient-based rounding, and QAT, respectively.
LDLQ is equivalent to GPTQ and \textit{is the rounding algorithm used in almost all SOTA weight-only quantization works}, including GPTQ, QuIP, QuIP\#, QTIP, as well as weight-activation quantization works such as QuaRot, SpinQuant, and NestQuant \cite{quip, qs, qtip, quarot, spinquant, nestquant}. 
Of our baselines, only LDLQ does not require computing gradients, and QAT requires significantly more data than the others.
Regardless, \kws achieves a lower KL than \textit{all presented baselines} while achieving state of the art downstream performance.

We do not compare to works such as AQLM, AWQ, and SmoothQuant, since the above LDLQ-based methods already outperform them \cite{aqlm, awq, smoothquant}.
Likewise, we do not compare to older QAT methods like LSQ and LLM-QAT, since these do not consistently outperform \textit{existing} LDLQ-based baselines and are conceptually different than the ``continued pretraining'' QAT that modern industry labs use \cite{lsq,llmqat,lsqbad,gemma3}. 
Additional experiments and comparisons to CBQ \citep{cbq} and PV-Tuning \citep{pvtuning} are in the Appendix.



\subsection{Second Order Quantization Algorithms}
\label{sec:secondorder}
\begin{table*}[t!]
\centering
\caption{Results with incoherence processing, the QTIP quantizer, and no finetuning. \kws strongly outperforms LDLQ and GuidedQuant, two SOTA local and global adaptive rounding methods. \kws reduces the KL by 30\% over LDLQ. Individual 0-shot results in Appendix.}
\small
\sc
\tabcolsep=0.15cm
\renewcommand{\arraystretch}{0.86}
\vspace{0.1cm}
\begin{NiceTabular}{cccccccccc}
\toprule
  \multirow{2}{*}{Algo.}                     &  \multirow{2}{*}{Bits}      &  $\KL \downarrow$ & \multicolumn{2}{c}{PPL $\downarrow$} & 0-Shot $\uparrow$ &  $\KL \downarrow$ & \multicolumn{2}{c}{PPL $\downarrow$} & 0-Shot $\uparrow$ \\ \cmidrule(lr){3-3} \cmidrule(lr){4-5} \cmidrule(lr){6-6} \cmidrule(lr){7-7} \cmidrule(lr){8-9} \cmidrule(lr){10-10}
                  & & W2               & W2               & C4                & Avg               & W2               & W2                & C4               & Avg               \\ \midrule
                       &        & \multicolumn{4}{c}{ 3.1 70B Inst.}                                  & \multicolumn{4}{c}{ 3.2 3B Inst.}                                   \\  \cmidrule(lr){3-6} \cmidrule(lr){7-10}
          BF16 & 16   & 0                & 3.52             & 6.46              & 67.67             & 0                & 9.58              & 10.62            & 63.79             \\   \cmidrule(lr){3-6} \cmidrule(lr){7-10}
\multirow{3}{*}{LDLQ}  &  2 & 0.497            & 6.02             & 7.82              & 65.45             & 0.455            & 15.30             & 14.69            & 55.68             \\
                       &  3 & 0.138            & 4.26             & 6.74              & 67.58             & 0.085            & 10.69             & 11.44            & 62.46             \\
                       &  4 & 0.045            & 3.74             & 6.54              & 67.58             & 0.021            & \textbf{9.78}              & 10.79            & 63.70             \\   \cmidrule(lr){3-6} \cmidrule(lr){7-10}
\multirow{3}{*}{GuidedQuant}  &  2 & 0.383            & 5.59             & 7.51              & 65.44             & 0.342            & 13.56             & 13.96            & 57.72             \\
                       &  3 & 0.111            & 4.10             & 6.68              & 67.33             & 0.070            & 10.48             & 11.19            & 63.48             \\
                       &  4 & 0.035            & 3.72             & 6.53              & \textbf{67.80}             & 0.019            & 9.84              & 10.80            & \textbf{64.10}             \\   \cmidrule(lr){3-6} \cmidrule(lr){7-10}
\multirow{3}{*}{\textbf{\kw-A}} &  2 & 0.378            & 5.56             & 7.51              & 65.92             & 0.333            & 13.75             & 13.56            & 58.29             \\
                       &  3 & 0.110            & 4.10             & 6.68              & \textbf{67.69}             & 0.059            & 10.37             & 11.14            & 62.51             \\
                       &  4 & 0.036            & 3.73             & 6.52              & 67.50             & 0.015            & \textbf{9.78}              & 10.76            & 63.28             \\   \cmidrule(lr){3-6} \cmidrule(lr){7-10}
\multirow{3}{*}{\textbf{\kw-B}} &  2 &     \textbf{0.335}             &       \textbf{5.30}           &     \textbf{7.34}              &           \textbf{66.19}        & \textbf{0.288}            & \textbf{13.18}             & \textbf{13.10}            & \textbf{59.11}             \\
                       &  3 &    \textbf{0.094}              &        \textbf{4.01}          &       \textbf{6.64}            &        67.24           & \textbf{0.047}            & \textbf{10.15}             & \textbf{11.09}            & \textbf{62.74}            \\
                       &  4 &       \textbf{0.030}           &        \textbf{3.69}          &        \textbf{6.51}           &       67.73            & \textbf{0.014}            & 9.80              & \textbf{10.75}            & 63.31             \\ \midrule
                       &        & \multicolumn{4}{c}{ 3.1 8B Inst.}                                   & \multicolumn{4}{c}{ 3.2 1B Inst.}                                   \\  \cmidrule(lr){3-6} \cmidrule(lr){7-10}
         BF16 & 16             & 0                & 6.50             & 8.02              & 69.82             & 0                & 11.57             & 13.20            & 54.79             \\  \cmidrule(lr){3-6} \cmidrule(lr){7-10}
\multirow{3}{*}{LDLQ}  &  2 & 0.356            & 9.39             & 10.70             & 62.51             & 0.527            & 19.86             & 19.66            & 49.60             \\
                       &  3 & 0.069            & 7.05             & 8.50              & 69.29             & 0.109            & 12.95             & 14.44            & 52.89             \\
                       &  4 & 0.019            & 6.63             & 8.15              & 69.41             & 0.032            & 12.05             & 13.63            & 53.47             \\  \cmidrule(lr){3-6} \cmidrule(lr){7-10}
\multirow{3}{*}{GuidedQuant}  &  2 & 0.317            & 9.03             & 10.42              & 63.12             & 0.473            & 18.95             & 19.54            & 48.76             \\
                       &  3 & 0.062            & 6.98             & 8.44              & 68.53             & 0.095            & 12.82             & 14.33            & 53.48             \\
                       &  4 & 0.018            & 6.65             & 8.14              & 69.05             & 0.027            & 11.93              & 13.54            & \textbf{54.18}             \\   \cmidrule(lr){3-6} \cmidrule(lr){7-10}
\multirow{3}{*}{\textbf{\kw-A}} &  2 & 0.284            & 8.79             & 10.09             & 64.06             & 0.371            & 17.22             & 17.64            & 50.02             \\
                       &  3 & 0.050            & 6.89             & 8.38              & 69.09             & 0.072            & 12.56             & 14.02            & \textbf{53.86}             \\
                       &  4 & 0.015            & 6.62             & 8.13              & 69.62             & 0.021            & \textbf{11.83 }            & 13.44            &54.17             \\  \cmidrule(lr){3-6} \cmidrule(lr){7-10}
\multirow{3}{*}{\textbf{\kw-B}} &  2 & \textbf{0.241}            & \textbf{8.39}             & \textbf{9.83}             & \textbf{64.32}             & \textbf{0.334}            & \textbf{16.66}             & \textbf{17.36}            & \textbf{50.90}             \\
                       &  3 & \textbf{0.044}            & \textbf{6.85}             & \textbf{8.34}              & \textbf{69.31}             & \textbf{0.065}            & \textbf{12.51}             & \textbf{13.97}            & 53.41             \\
                       &  4 & \textbf{0.013}            & \textbf{6.61}             & \textbf{8.12}              & \textbf{69.78}             & \textbf{0.019}            & \textbf{11.83}             & \textbf{13.41}            & 53.84            \\ \bottomrule
\end{NiceTabular}
\label{tab:noft}
\end{table*}

Table \ref{tab:noft} compares \kws against LDLQ and GuidedQuant, which respectively use local and global second order information during adaptive rounding. 
\kws consistently outperforms both LDLQ and GuidedQuant in all metrics, with \kw-B being better than \kw-A as expected.
In Section \ref{sec:abcomp}, we showed that LDLQ was provably worse than \kw; indeed, this is reflected empirically.
GuidedQuant uses a block diagonal approximation of the empirical Fisher and runs LDLQ on each block.
Although GuidedQuant outperforms LDLQ and prior ``end-to-end'' methods like SqueezeLLM, neither its Hessian or rounding algorithm come with easy-to-compute bounds on the end-to-end error.
In contrast, \kws both comes with theoretical guarantees and empirically outperforms GuidedQuant.

\subsection{First Order Quantization Methods}

\begin{table*}[t]
\centering
\caption{Results with incoherence processing, the INT4 quantizer, and no finetuning for Llama 3.1 8B Instruct. \kws is quantizer agnostic and works with standard datatypes such as INT4.}
\small
\sc
\renewcommand{\arraystretch}{0.88}
\vspace{0.05cm}
\begin{NiceTabular}{@{}cccccccccc@{}}
\toprule
\multirow{2}{*}{Algo.} & $D_{KL} \downarrow$ & \multicolumn{2}{c}{Ppl $\downarrow$} & \multicolumn{6}{c}{0-shot Acc $\uparrow$}              \\ \cmidrule(lr){2-2} \cmidrule(lr){3-4} \cmidrule(lr){5-10}
                 & W2                  & W2                & C4               & Avg            & ArcC  & ArcE  & BoolQ & HSwag & PiQA  \\ \midrule
BF16 & 0 & 6.50 & 8.02 & 69.82 & 51.37 & 78.03 & 82.05 & 57.74 & 79.92 \\ \cmidrule(lr){2-10}
LDLQ                 & 0.038               & 6.76              & 8.26             & 67.99          & 50.00 & 76.94 & 77.01 & 56.83 & 79.16 \\
DiscQuant &  0.061   &  6.83 &  8.37  & 67.68 &  50.34  &  77.44 & 74.12  & 56.55  & 79.92      \\
\kw-A                & \textbf{0.028}      & \textbf{6.71}     & \textbf{8.21}    & \textbf{69.11} & 50.68 & 78.11 & 79.65 & 57.13 & 79.98 \\
\kw-B                & 0.029               & 6.72              & 8.23             & 68.92          & 49.49 & 77.31 & 81.01 & 56.98 & 79.82 \\ \bottomrule
\end{NiceTabular}
\label{tab:noftint4}
\end{table*}

\begin{table*}[t]
\centering
\caption{Results for Gemma 3 12B Inst. with INT4 \textit{without finetuning}. Despite being trained on the original model's outputs, the QAT model has a higher KL to the original model than \kw.}
\small
\sc
\tabcolsep=0.12cm
\renewcommand{\arraystretch}{0.88}
\vspace{0.05cm}
\begin{tabular}{cccccccccccc}
\toprule
 \multirow{2}{*}{Algo.} & \multirow{2}{*}{\begin{tabular}[c]{@{}c@{}}Quant.\\Type\end{tabular}} &\multirow{2}{*}{Bits}          & $D_{KL} \downarrow$ & \multicolumn{2}{c}{Ppl $\downarrow$} & \multicolumn{6}{c}{0-shot Acc $\uparrow$}     \\ \cmidrule(lr){4-4} \cmidrule(lr){5-6} \cmidrule(lr){7-12}
      & & & W2                  & W2                & C4               & Avg   & ArcC  & ArcE  & BoolQ & HSwag & PiQA  \\ \midrule
BF16       & None & 16 & 0               & 7.85              & 8.61             & 70.22 & 54.01 & 78.79 & 87.25 & 54.27 & 76.77 \\ \midrule
\begin{tabular}[c]{@{}c@{}}Google QAT\end{tabular} & QAT &  4.5 & 0.089               & 7.56              & 8.52             & 70.83 &54.52&	79.76&	86.82&	54.77&	78.29 \\ \midrule
\textbf{\kw-A}      & PTQ & 4 &         0.058           &      7.96             &     8.69             & 70.12 & 53.90 & 78.83 & 87.09 & 53.68 & 77.09 \\
\textbf{\kw-B}      &  PTQ & 4 &           0.056        &         7.94          &    8.67              & 69.90 & 54.10 & 78.66 & 86.91 & 54.13 & 75.68 \\ \bottomrule
\end{tabular}
\label{tab:gemma}
\end{table*}
 
\begin{table*}[t!]
\centering
\caption{Results with the QTIP quantizer, incoherence processing, and recovery finetuning. Although finetuning reduces the gap between \kws and LDLQ, \kws still achieves state-of-the-art results.}
\label{tab:ft}
\small
\sc
\vspace{0.05cm}
\renewcommand{\arraystretch}{0.86}
\begin{tabular}{cccccccccc}
\toprule
\multirow{2}{*}{Algo.}  & \multirow{2}{*}{Bits} & $\KL\downarrow$ & \multicolumn{2}{c}{PPL $\downarrow$} & 0-Shot $\uparrow$ & $\KL\downarrow$ & \multicolumn{2}{c}{PPL $\downarrow$} & 0-Shot $\uparrow$ \\ \cmidrule(lr){3-3} \cmidrule(lr){4-5} \cmidrule(lr){6-6} \cmidrule(lr){7-7} \cmidrule(lr){8-9} \cmidrule(lr){10-10}
                       &                         & W2               & W2                & C4               & Avg               & W2               & W2                & C4               & Avg               \\ \midrule
                       &                         & \multicolumn{4}{c}{Llama 3.1 70B Instruct}                                  & \multicolumn{4}{c}{Llama 3.1 8B Instruct}                                   \\ \cmidrule(lr){3-6} \cmidrule(lr){7-10}
         BF16              & 16                    & 0                & 3.52              & 6.46             & 67.67             & 0                & 6.50              & 8.02             & 69.82             \\ \cmidrule(lr){3-6} \cmidrule(lr){7-10}
\multirow{3}{*}{LDLQ}  & 2                  & 0.302            & 5.01              & 7.16             & 66.11             & 0.185            & 7.82              & 9.20             & 65.44             \\
                       & 3                  & 0.101            & 3.96              & 6.64             & \textbf{67.46}             & 0.048            & 6.80              & 8.31             & 68.42             \\
                       & 4                  & 0.036            & 3.71              & 6.54             & 67.64             & 0.016            & 6.61              & 8.13             & 69.47             \\ \cmidrule(lr){3-6} \cmidrule(lr){7-10}
\multirow{3}{*}{\textbf{\kw-A}} & 2    &  0.279             &       4.92           &         7.10          &      66.26                              & 0.163   & 7.63     & 9.06    & \textbf{67.54}    \\
                       & 3                  &       0.098           & 3.88             &       6.63           &     66.94              & 0.042            & 6.78              & 8.28             & \textbf{69.23}             \\
                       & 4                  &     0.032             & 3.68              &         \textbf{6.52}         &     67.59              & 0.014            & 6.58              & \textbf{8.10}    & 69.50    \\ \cmidrule(lr){3-6} \cmidrule(lr){7-10}
\multirow{3}{*}{\textbf{\kw-B}} & 2                  &           \textbf{0.266}       &         \textbf{4.82}          &        \textbf{7.07}          &       \textbf{66.99}            &\textbf{ 0.147}            & \textbf{7.60}              & \textbf{8.96}             & 66.38             \\
                       & 3                  &          \textbf{0.091}        &      \textbf{3.87}             &        \textbf{6.61}          & 67.42                  & \textbf{0.038}   & \textbf{6.74}     & \textbf{8.27}    & 68.88    \\
                       & 4                  &       \textbf{0.029}           &       \textbf{3.67}            &          \textbf{6.52}        &  \textbf{67.69}                 & \textbf{0.012}   & \textbf{6.56}     & 8.11             & \textbf{70.12}   \\ \bottomrule
\end{tabular}

\end{table*}


Tables \ref{tab:noftint4} and \ref{tab:gemma} compare \kws with descent-style approaches.
Since these methods require requantization or projection at every step, they are effectively limited to suboptimal scalar quantizers.
In contrast, \kws is a general rounding algorithm that can be used with any quantizer.
Regardless, even when used with scalar quantizers, \kws outperforms these methods.
Table \ref{tab:noftint4} compares \kws to DiscQuant, which performs gradient descent on the KL in a localized subspace.
Both \kws and LDLQ outperform DiscQuant, showing that unlike \kw, descent-based methods do not always outperform local adaptive rounding.
Table \ref{tab:gemma} compares \kws against Google's QAT Gemma 3 12B Instruct \citep{gemma3}.
We estimate the QAT process took 100$\times$ more data than \kw.
Although the QAT model somehow outperforms the original model in downstream tasks, \textit{even without finetuning}, \kws models have a lower KL divergence to the original model.
\kws also maintains a smaller gap in downstream performance, suggesting that QAT actually produces a considerably different model and \kws better preserves the original model.



\subsection{Finetuning} 
Recent PTQ works have included a ``recovery finetuning'' step that adjusts unquantized parameters \textit{before quantization} to compensate for the effect of \textit{already-quantized layers} \citep{qs,qtip,aqlm,pvtuning}.
This effectively includes cross layer information, which raises the question: how much overlap is there between finetuning and \kw?
Table \ref{tab:ft} shows an experiment with QuIP\#'s recovery finetuning algorithm and the setup in Table \ref{tab:noft}.
For both models, recovery finetuning reduces the gap between LDLQ and \kw. 
However, \kws still reduces the KL by $\approx$ 10-20\% over LDLQ and maintains a large gap in downstream tasks, indicating that \kws uses global information that is not available during recovery finetuning.
Finally, the LDLQ results in this table correspond to the results in the state-of-the-art QTIP paper, so \kw's results Table \ref{tab:ft} set a new state of the art across all PTQ methods and quantizers.

\section{Conclusion}

In this work, we present \kw, a new rounding algorithm that generalizes state-of-the-art local layerwise quantization methods to minimize the end-to-end error.
\kws introduces a series of theoretical results that give, for the first time, end-to-end error bounds for quantization algorithms.
These results admit a natural Kronecker-factored Hessian structure for use with adaptive rounding algorithms, as well as corresponding near-optimal Hessian sketches.
\kws is provably better than GPTQ/LDLQ and empirically reduces the KL by $\approx 30\%$ over these methods.
Even further, \kws achieves a lower KL than QAT and sets a new empirical state-of-the-art in downstream tasks among PTQ methods, all while adding no inference overhead.

\FloatBarrier
\section*{Impact Statement}

This paper presents work whose goal is to advance the field of Machine
Learning. There are many potential societal consequences of our work, none
which we feel must be specifically highlighted here.

\section*{Acknowledgements}
AT was supported by the NSF Graduate Research Fellowship. 
CD was supported by DARPA YFA D24AP00259 and NSF Career Award 2046760. 
We thank Together AI for compute resources. 

\bibliography{example_paper}

@inproceedings{
tsengl3,
title={\$L{\textasciicircum}3\$: Large Lookup Layers},
author={Albert Tseng and Christopher {De Sa}},
booktitle={Forty-third International Conference on Machine Learning},
year={2026},
url={https://openreview.net/forum?id=A7LiBHEsOo}
}

@misc{matvec,
      title={A Two Pronged Progress in Structured Dense Matrix Multiplication}, 
      author={Christopher {De Sa} and Albert Gu and Rohan Puttagunta and Christopher Ré and Atri Rudra},
      year={2017},
      eprint={1611.01569},
      archivePrefix={arXiv},
      primaryClass={cs.DS},
      url={https://arxiv.org/abs/1611.01569}, 
}

@inproceedings{
cbq,
title={{CBQ}: Cross-Block Quantization for Large Language Models},
author={Xin Ding and Xiaoyu Liu and Zhijun Tu and Yun Zhang and Wei Li and Jie Hu and Hanting Chen and Yehui Tang and Zhiwei Xiong and Baoqun Yin and Yunhe Wang},
booktitle={The Thirteenth International Conference on Learning Representations},
year={2025},
url={https://openreview.net/forum?id=eW4yh6HKz4}
}

@inproceedings{
guidedquant,
title={GuidedQuant: Large Language Model Quantization via Exploiting End Loss Guidance},
author={Jinuk Kim and Marwa El Halabi and Wonpyo Park and Clemens JS Schaefer and Deokjae Lee and Yeonhong Park and Jae W. Lee and Hyun Oh Song},
booktitle={Forty-second International Conference on Machine Learning},
year={2025},
url={https://openreview.net/forum?id=ZawsPjlIGu}
}

@inproceedings{
pvtuning,
title={{PV}-Tuning: Beyond Straight-Through Estimation for Extreme {LLM} Compression},
author={Vladimir Malinovskii and Denis Mazur and Ivan Ilin and Denis Kuznedelev and Konstantin Pavlovich Burlachenko and Kai Yi and Dan Alistarh and Peter Richt{\'a}rik},
booktitle={The Thirty-eighth Annual Conference on Neural Information Processing Systems},
year={2024},
url={https://openreview.net/forum?id=YvA8UF0I37}
}

@inproceedings{
quip,
title={Qu{IP}: 2-Bit Quantization of Large Language Models With Guarantees},
author={Jerry Chee and Yaohui Cai and Volodymyr Kuleshov and Christopher De Sa},
booktitle={Thirty-seventh Conference on Neural Information Processing Systems},
year={2023},
url={https://openreview.net/forum?id=xrk9g5vcXR}
}

@misc{qs,
      title={QuIP\#: Even Better LLM Quantization with Hadamard Incoherence and Lattice Codebooks},
      author={Albert Tseng and Jerry Chee and Qingyao Sun and Volodymyr Kuleshov and Christopher De Sa},
      year={2024},
      eprint={2402.04396},
      archivePrefix={arXiv},
      primaryClass={cs.LG}
}

@misc{quarot,
      title={QuaRot: Outlier-Free 4-Bit Inference in Rotated LLMs}, 
      author={Saleh Ashkboos and Amirkeivan Mohtashami and Maximilian L. Croci and Bo Li and Martin Jaggi and Dan Alistarh and Torsten Hoefler and James Hensman},
      year={2024},
      eprint={2404.00456},
      archivePrefix={arXiv},
      primaryClass={cs.LG}
}

@misc{aqlm,
      title={Extreme Compression of Large Language Models via Additive Quantization}, 
      author={Vage Egiazarian and Andrei Panferov and Denis Kuznedelev and Elias Frantar and Artem Babenko and Dan Alistarh},
      year={2024},
      eprint={2401.06118},
      archivePrefix={arXiv},
      primaryClass={cs.LG}
}

@InProceedings{nagelqat,
  title = 	 {Overcoming Oscillations in Quantization-Aware Training},
  author =       {Nagel, Markus and Fournarakis, Marios and Bondarenko, Yelysei and Blankevoort, Tijmen},
  booktitle = 	 {Proceedings of the 39th International Conference on Machine Learning},
  pages = 	 {16318--16330},
  year = 	 {2022},
  editor = 	 {Chaudhuri, Kamalika and Jegelka, Stefanie and Song, Le and Szepesvari, Csaba and Niu, Gang and Sabato, Sivan},
  volume = 	 {162},
  series = 	 {Proceedings of Machine Learning Research},
  month = 	 {17--23 Jul},
  publisher =    {PMLR},
  pdf = 	 {https://proceedings.mlr.press/v162/nagel22a/nagel22a.pdf},
  url = 	 {https://proceedings.mlr.press/v162/nagel22a.html}
  }

@InProceedings{adaround,
  title = 	 {Up or Down? {A}daptive Rounding for Post-Training Quantization},
  author =       {Nagel, Markus and Amjad, Rana Ali and Van Baalen, Mart and Louizos, Christos and Blankevoort, Tijmen},
  booktitle = 	 {Proceedings of the 37th International Conference on Machine Learning},
  pages = 	 {7197--7206},
  year = 	 {2020},
  editor = 	 {III, Hal Daumé and Singh, Aarti},
  volume = 	 {119},
  series = 	 {Proceedings of Machine Learning Research},
  month = 	 {13--18 Jul},
  publisher =    {PMLR},
  pdf = 	 {http://proceedings.mlr.press/v119/nagel20a/nagel20a.pdf},
  url = 	 {https://proceedings.mlr.press/v119/nagel20a.html},
  abstract = 	 {When quantizing neural networks, assigning each floating-point weight to its nearest fixed-point value is the predominant approach. We find that, perhaps surprisingly, this is not the best we can do. In this paper, we propose AdaRound, a better weight-rounding mechanism for post-training quantization that adapts to the data and the task loss. AdaRound is fast, does not require fine-tuning of the network, and only uses a small amount of unlabelled data. We start by theoretically analyzing the rounding problem for a pre-trained neural network. By approximating the task loss with a Taylor series expansion, the rounding task is posed as a quadratic unconstrained binary optimization problem. We simplify this to a layer-wise local loss and propose to optimize this loss with a soft relaxation. AdaRound not only outperforms rounding-to-nearest by a significant margin but also establishes a new state-of-the-art for post-training quantization on several networks and tasks. Without fine-tuning, we can quantize the weights of Resnet18 and Resnet50 to 4 bits while staying within an accuracy loss of 1%.}
}

@misc{awq,
      title={AWQ: Activation-aware Weight Quantization for LLM Compression and Acceleration}, 
      author={Ji Lin and Jiaming Tang and Haotian Tang and Shang Yang and Xingyu Dang and Chuang Gan and Song Han},
      year={2023},
      eprint={2306.00978},
      archivePrefix={arXiv},
      primaryClass={cs.CL}
}

@inproceedings{gptq,
title={{OPTQ}: Accurate Quantization for Generative Pre-trained Transformers},
author={Elias Frantar and Saleh Ashkboos and Torsten Hoefler and Dan Alistarh},
booktitle={The Eleventh International Conference on Learning Representations },
year={2023},
url={https://openreview.net/forum?id=tcbBPnfwxS}
}

@software{rpv1,
  author = {Together Computer},
  title = {RedPajama: An Open Source Recipe to Reproduce LLaMA training dataset},
  month = April,
  year = 2023,
  url = {https://github.com/togethercomputer/RedPajama-Data}
}

@misc{neilsloane,
	author = {Sloane, Neil},
	title = {{H}adamard {M}atrices --- neilsloane.com},
	howpublished = {\url{http://neilsloane.com/hadamard/}},
	year = {},
	note = {[Accessed 02-02-2024]},
}

@misc{gemma3,
      title={Gemma 3 Technical Report}, 
      author={Gemma Team and Aishwarya Kamath and Johan Ferret and other},
      year={2025},
      eprint={2503.19786},
      archivePrefix={arXiv},
      primaryClass={cs.CL},
      url={https://arxiv.org/abs/2503.19786}, 
}

@Book{statbook,
  author={Gourieroux,Christian and Monfort,Alain},
  title={{Statistics and Econometric Models}},
  publisher={Cambridge University Press},
  year=1995,
  month={December},
  volume={},
  number={9780521471626},
  series={Cambridge Books},
  edition={},
  keywords={},
  doi={},
  isbn={},
  abstract={This two-volume work aims to present as completely as possible the methods of statistical inference with special reference to their economic applications. The reader will find a description not only of the classical concepts and results of mathematical statistics, but also of concepts and methods recently developed for the specific needs of econometrics. The authors have sought to avoid an overly technical presentation and go to some lengths to encourage an intuitive understanding of the results by providing numerous examples throughout. The breadth of approaches and the extensive coverage of the two volumes provide for a thorough and entirely self-contained course in modern econometrics. Volume 1 provides an introduction to general concepts and methods in statistics and econometrics, and goes on to cover estimation and prediction. Volume 2 focuses on testing, confidence regions, model selection, and asymptotic theory.},
  url={https://ideas.repec.org/b/cup/cbooks/9780521471626.html}
}

@InProceedings{kfac,
  title = 	 {Optimizing Neural Networks with Kronecker-factored Approximate Curvature},
  author = 	 {Martens, James and Grosse, Roger},
  booktitle = 	 {Proceedings of the 32nd International Conference on Machine Learning},
  pages = 	 {2408--2417},
  year = 	 {2015},
  editor = 	 {Bach, Francis and Blei, David},
  volume = 	 {37},
  series = 	 {Proceedings of Machine Learning Research},
  address = 	 {Lille, France},
  month = 	 {07--09 Jul},
  publisher =    {PMLR},
  pdf = 	 {http://proceedings.mlr.press/v37/martens15.pdf},
  url = 	 {https://proceedings.mlr.press/v37/martens15.html},
  abstract = 	 {We propose an efficient method for approximating natural gradient descent in neural networks which we call Kronecker-factored Approximate Curvature (K-FAC). K-FAC is based on an efficiently invertible approximation of a neural network’s Fisher information matrix which is neither diagonal nor low-rank, and in some cases is completely non-sparse. It is derived by approximating various large blocks of the Fisher (corresponding to entire layers) as being the Kronecker product of two much smaller matrices. While only several times more expensive to compute than the plain stochastic gradient, the updates produced by K-FAC make much more progress optimizing the objective, which results in an algorithm that can be much faster than stochastic gradient descent with momentum in practice. And unlike some previously proposed approximate natural-gradient/Newton methods which use high-quality non-diagonal curvature matrices (such as Hessian-free optimization), K-FAC works very well in highly stochastic optimization regimes. This is because the cost of storing and inverting K-FAC’s approximation to the curvature matrix does not depend on the amount of data used to estimate it, which is a feature typically associated only with diagonal or low-rank approximations to the curvature matrix.}
}

@inproceedings{ekfac,
 author = {George, Thomas and Laurent, C\'{e}sar and Bouthillier, Xavier and Ballas, Nicolas and Vincent, Pascal},
 booktitle = {Advances in Neural Information Processing Systems},
 editor = {S. Bengio and H. Wallach and H. Larochelle and K. Grauman and N. Cesa-Bianchi and R. Garnett},
 pages = {},
 publisher = {Curran Associates, Inc.},
 title = {Fast Approximate Natural Gradient Descent in a Kronecker Factored Eigenbasis},
 url = {https://proceedings.neurips.cc/paper_files/paper/2018/file/48000647b315f6f00f913caa757a70b3-Paper.pdf},
 volume = {31},
 year = {2018}
}

@InProceedings{shampoo,
  title = 	 {Shampoo: Preconditioned Stochastic Tensor Optimization},
  author =       {Gupta, Vineet and Koren, Tomer and Singer, Yoram},
  booktitle = 	 {Proceedings of the 35th International Conference on Machine Learning},
  pages = 	 {1842--1850},
  year = 	 {2018},
  editor = 	 {Dy, Jennifer and Krause, Andreas},
  volume = 	 {80},
  series = 	 {Proceedings of Machine Learning Research},
  month = 	 {10--15 Jul},
  publisher =    {PMLR},
  pdf = 	 {http://proceedings.mlr.press/v80/gupta18a/gupta18a.pdf},
  url = 	 {https://proceedings.mlr.press/v80/gupta18a.html},
  abstract = 	 {Preconditioned gradient methods are among the most general and powerful tools in optimization. However, preconditioning requires storing and manipulating prohibitively large matrices. We describe and analyze a new structure-aware preconditioning algorithm, called Shampoo, for stochastic optimization over tensor spaces. Shampoo maintains a set of preconditioning matrices, each of which operates on a single dimension, contracting over the remaining dimensions. We establish convergence guarantees in the stochastic convex setting, the proof of which builds upon matrix trace inequalities. Our experiments with state-of-the-art deep learning models show that Shampoo is capable of converging considerably faster than commonly used optimizers. Surprisingly, although it involves a more complex update rule, Shampoo’s runtime per step is comparable in practice to that of simple gradient methods such as SGD, AdaGrad, and Adam.}
}

@inproceedings{
soap,
title={{SOAP}: Improving and Stabilizing Shampoo using Adam for Language Modeling},
author={Nikhil Vyas and Depen Morwani and Rosie Zhao and Itai Shapira and David Brandfonbrener and Lucas Janson and Sham M. Kakade},
booktitle={The Thirteenth International Conference on Learning Representations},
year={2025},
url={https://openreview.net/forum?id=IDxZhXrpNf}
}

@inproceedings{
fp4,
title={Training {LLM}s with {MXFP}4},
author={Albert Tseng and Tao Yu and Youngsuk Park},
booktitle={The 28th International Conference on Artificial Intelligence and Statistics},
year={2025},
url={https://openreview.net/forum?id=a8z5Q0WSPL}
}

@inproceedings{obs,
 author = {Hassibi, Babak and Stork, David and Wolff, Gregory},
 booktitle = {Advances in Neural Information Processing Systems},
 editor = {J. Cowan and G. Tesauro and J. Alspector},
 pages = {},
 publisher = {Morgan-Kaufmann},
 title = {Optimal Brain Surgeon: Extensions and performance comparisons},
 url = {https://proceedings.neurips.cc/paper_files/paper/1993/file/b056eb1587586b71e2da9acfe4fbd19e-Paper.pdf},
 volume = {6},
 year = {1993}
}

@article{squeezellm,
  title={SqueezeLLM: Dense-and-Sparse Quantization},
  author={Kim, Sehoon and Hooper, Coleman and Gholami, Amir and Dong, Zhen and Li, Xiuyu and Shen, Sheng and Mahoney, Michael and Keutzer, Kurt},
  journal={arXiv},
  year={2023}
}

@misc{chinchilla,
      title={Training Compute-Optimal Large Language Models}, 
      author={Jordan Hoffmann and Sebastian Borgeaud and Arthur Mensch and Elena Buchatskaya and Trevor Cai and Eliza Rutherford and Diego de Las Casas and Lisa Anne Hendricks and Johannes Welbl and Aidan Clark and Tom Hennigan and Eric Noland and Katie Millican and George van den Driessche and Bogdan Damoc and Aurelia Guy and Simon Osindero and Karen Simonyan and Erich Elsen and Jack W. Rae and Oriol Vinyals and Laurent Sifre},
      year={2022},
      eprint={2203.15556},
      archivePrefix={arXiv},
      primaryClass={cs.CL},
      url={https://arxiv.org/abs/2203.15556}, 
}

@misc{discquant,
      title={DiscQuant: A Quantization Method for Neural Networks Inspired by Discrepancy Theory}, 
      author={Jerry Chee and Arturs Backurs and Rainie Heck and Li Zhang and Janardhan Kulkarni and Thomas Rothvoss and Sivakanth Gopi},
      year={2025},
      eprint={2501.06417},
      archivePrefix={arXiv},
      primaryClass={cs.LG},
      url={https://arxiv.org/abs/2501.06417}, 
}

@article{vanloan,
title = {The ubiquitous Kronecker product},
journal = {Journal of Computational and Applied Mathematics},
volume = {123},
number = {1},
pages = {85-100},
year = {2000},
note = {Numerical Analysis 2000. Vol. III: Linear Algebra},
issn = {0377-0427},
doi = {https://doi.org/10.1016/S0377-0427(00)00393-9},
url = {https://www.sciencedirect.com/science/article/pii/S0377042700003939},
author = {Charles F.Van Loan},
abstract = {The Kronecker product has a rich and very pleasing algebra that supports a wide range of fast, elegant, and practical algorithms. Several trends in scientific computing suggest that this important matrix operation will have an increasingly greater role to play in the future. First, the application areas where Kronecker products abound are all thriving. These include signal processing, image processing, semidefinite programming, and quantum computing. Second, sparse factorizations and Kronecker products are proving to be a very effective way to look at fast linear transforms. Researchers have taken the Kronecker methodology as developed for the fast Fourier transform and used it to build exciting alternatives. Third, as computers get more powerful, researchers are more willing to entertain problems of high dimension and this leads to Kronecker products whenever low-dimension techniques are “tensored” together.}
}

@misc{lsq,
      title={Learned Step Size Quantization}, 
      author={Steven K. Esser and Jeffrey L. McKinstry and Deepika Bablani and Rathinakumar Appuswamy and Dharmendra S. Modha},
      year={2020},
      eprint={1902.08153},
      archivePrefix={arXiv},
      primaryClass={cs.LG},
      url={https://arxiv.org/abs/1902.08153}, 
}

@misc{lsqbad,
      title={Low-Rank Quantization-Aware Training for LLMs}, 
      author={Yelysei Bondarenko and Riccardo Del Chiaro and Markus Nagel},
      year={2024},
      eprint={2406.06385},
      archivePrefix={arXiv},
      primaryClass={cs.LG},
      url={https://arxiv.org/abs/2406.06385}, 
}

@misc{llmqat,
      title={LLM-QAT: Data-Free Quantization Aware Training for Large Language Models}, 
      author={Zechun Liu and Barlas Oguz and Changsheng Zhao and Ernie Chang and Pierre Stock and Yashar Mehdad and Yangyang Shi and Raghuraman Krishnamoorthi and Vikas Chandra},
      year={2023},
      eprint={2305.17888},
      archivePrefix={arXiv},
      primaryClass={cs.CL},
      url={https://arxiv.org/abs/2305.17888}, 
}

@misc{smoothquant,
      title={SmoothQuant: Accurate and Efficient Post-Training Quantization for Large Language Models}, 
      author={Guangxuan Xiao and Ji Lin and Mickael Seznec and Hao Wu and Julien Demouth and Song Han},
      year={2024},
      eprint={2211.10438},
      archivePrefix={arXiv},
      primaryClass={cs.CL},
      url={https://arxiv.org/abs/2211.10438}, 
}

@misc{spinquant,
      title={SpinQuant: LLM quantization with learned rotations}, 
      author={Zechun Liu and Changsheng Zhao and Igor Fedorov and Bilge Soran and Dhruv Choudhary and Raghuraman Krishnamoorthi and Vikas Chandra and Yuandong Tian and Tijmen Blankevoort},
      year={2025},
      eprint={2405.16406},
      archivePrefix={arXiv},
      primaryClass={cs.LG},
      url={https://arxiv.org/abs/2405.16406}, 
}

@inproceedings{
nestquant,
title={NestQuant: nested lattice quantization for matrix products and {LLM}s},
author={Semyon Savkin and Eitan Porat and Or Ordentlich and Yury Polyanskiy},
booktitle={Forty-second International Conference on Machine Learning},
year={2025},
url={https://openreview.net/forum?id=4OWGON33HE}
}

@inproceedings{
shampoobasis,
title={A New Perspective on Shampoo's Preconditioner},
author={Depen Morwani and Itai Shapira and Nikhil Vyas and eran malach and Sham M. Kakade and Lucas Janson},
booktitle={The Thirteenth International Conference on Learning Representations},
year={2025},
url={https://openreview.net/forum?id=c6zI3Cp8c6}
}

@inproceedings{
qtip,
title={{QTIP}: Quantization with Trellises and Incoherence Processing},
author={Albert Tseng and Qingyao Sun and David Hou and Christopher De Sa},
booktitle={The Thirty-eighth Annual Conference on Neural Information Processing Systems},
year={2024},
url={https://openreview.net/forum?id=7sdkLVuYCU}
}

@misc{fisherlimit,
      title={Limitations of the Empirical Fisher Approximation for Natural Gradient Descent}, 
      author={Frederik Kunstner and Lukas Balles and Philipp Hennig},
      year={2020},
      eprint={1905.12558},
      archivePrefix={arXiv},
      primaryClass={cs.LG},
      url={https://arxiv.org/abs/1905.12558}, 
}
\bibliographystyle{icml2026}

\newpage
\appendix
\onecolumn
\newpage
\appendix
\section{Appendix}

\subsection{Experimental Setup and Implementation Details}

All Hessians were collected using the RedPajama v1 dataset \cite{rpv1}.
We use Hadamard matrices from Neil Sloane's website for the ``base Hadamard matrix'' in incoherence processing as described in Section \ref{sec:rht} \cite{neilsloane}.
We use the OPTQ ``Wikitext2'' and ``C4'' dataset splits for KL divergence and perplexity calculation \cite{gptq}.
We use a sequence length of 8192 for all KL divergence and perplexity evaluations.
We evaluate all zeroshot results with the chat template applied.
For finetuning experiments, we use the same setup as QTIP except that we normalize the activation error due to numerical instability from the default Adam $\epsilon = 10^{-8}$. 
For all Llama 3.1 70B Instruct experiments, we do not quantize the \texttt{0\_v} layer.
The Gemma QAT comparison was run with the \href{https://huggingface.co/google/gemma-3-12b-it-qat-q4_0-gguf/tree/main}{\texttt{google/gemma-3-12b-it-qat-q4\_0-gguf}} model on Huggingface, which was dequantized with the \texttt{4.52.0.dev0} nightly version of Transformers.
Our code can be found at \url{https://github.com/Cornell-RelaxML/yaqa-quantization}.

\subsection{Hessian Sketch Memory and Compute Requirements}

\kws requires storing $O(n^2)$ memory for $H_I$ and $O(m^2)$ memory for $H_O$. 
In practice, we only need to store the lower triangular parts of $H_I$ and $H_O$ since they are symmetric.
LDLQ and other layerwise activation methods only need to store $H_I$, so \kws requires roughly double the storage for the Hessians.
We found that computing and storing the Hessians in FP32 was sufficient to maintain numerical stability and positive-definiteness with a small regularization factor on the diagonal ($\approx 10^{-4} \tr(H)/n$), but using TF32 for computations was not.
The computational cost of computing Hessians with Sketch B is $O(btmn + bn^2m + bm^2n)$ on top of the cost of a forward pass and backprop, where $b$ is the number of sequences and $t$ is the average number of tokens per sequence.
The computational cost of computing Hessians with Sketch A is $O(pbt(n^2+m^2))$, again on top of a forward pass and backward pass, where $p$ is the number of steps of power iteration. 
Since we must use a larger $b$ for Sketch B than Sketch A to get acceptable variance, Sketch B is empirically slower than Sketch A.

\subsection{GuidedQuant as LDLQ}

In GuidedQuant, each of $g$ groups of output channels in $W^*$ shares an input Hessian for LDLQ.
This corresponds to a block-diagonal Hessian approximation, where there are $g \le m$ unique $n \times n$ blocks.
Since the L part of the LDL decomposition of a matrix is block lower unit triangular, observe that the SND of this approximation is simply the largest SND of any given block. 
This is $n$, which matches an optimal implementation of GuidedQuant where LDLQ is run in parallel across groups. 
However, observe that even though the number of serial steps is $n$, the number of FLOPs is increased by a factor of $g$, which means that GuidedQuant is still more expensive than regular LDLQ when $g > 1$.

\subsection{KL Divergence vs. Perplexity}

While the full model KL divergence and perplexity are conceptually related, they measure two fundamentally different quantities.
The KL measures the difference between two distributions and is defined over the full support of the distributions as $\KL(p \| q) = \sum_{x\in \mathcal{X}} p(x) \log \frac{p(x)}{q(x)}$.
The perplexity measures the mass on a ``ground truth'' target $\tau$ in single probability distribution $p$: $1/p(\tau)$.
As such, two models can have very similar perplexities but be completely different from each other.
For example, Llama 1 7B has a Wikitext 2 perplexity of 5.68 and Llama 2 7B has a perplexity of 5.47, but were pretrained from scratch separately.
Indeed, their KL divergence is 0.197, which is much higher than what the  difference in perplexity would suggest ($\log\left(\frac{5.68}{5.47}\right) = 0.038$)

\subsection{Incoherence Processing with the Random Hadamard Transform}
\label{sec:rht}
Although we describe incoherence processing for Hessians in the main body, incoherence processing also modifies the weights.
The full definition of incoherence, including \textit{weight incoherence}, is as follows:

\begin{definition}[\citet{quip}]
\label{def:inc}
A Hessian $H \in \mathbb{R}^{n \times n}$ is $\mu$-incoherent if its eigendecomposition $H = Q \Lambda Q^T$ has $\max_{i,j} \; |Q_{ij}| = \max_{i,j} \; |e_i^T Q e_j| \leq \mu / \sqrt{n}$. 
A weight matrix $W \in \mathbb{R}^{m \times n}$ is $\mu$-incoherent if $\max_{i,j} \; \textstyle |W_{ij}| = \max_{i,j} \; |e_i^T W e_j| \leq \mu \|W\|_F / \sqrt{mn}$.
\end{definition}

Incoherence processing with the RHT applies a Random Hadamard Transformation on $W$ and $H$.
Let $\mathsf{H}$ be a Hadamard matrix, defined as follows:

\begin{equation}
\mathsf{H}_1 = \begin{bmatrix}1\end{bmatrix} \hspace{1in} \mathsf{H}_n = \frac{1}{\sqrt{2}} \begin{bmatrix} \mathsf{H}_{n-1} & \mathsf{H}_{n-1} \\ \mathsf{H}_{n-1} & -\mathsf{H}_{n-1} \end{bmatrix}
\end{equation}

Then, the RHT performs $x \gets \mathsf{H}_{\log_2 n} S x$, where $S$ is a random sign vector $\in \{\pm1\}^n$, $x \in \mathbb{R}^n$, and $n$ is a power of 2.
For non power-of-2 $n$, we follow QuIP\# and use a fixed ``small'' Hadamard matrix as a base instead of $\mathsf{H}_1$.
Full proofs for bounds on the behavior of the RHT can be round in QuIP\#.

\subsection{\kws Rounding Algorithm}

The \kws rounding algorithm can be written as a fixed point iteration (Algorithm \ref{alg:ldlq2}).
It can also be implemented iteratively (Python code), which is faster for slower quantizers than the fixed point iteration implementation above. 

\begin{algorithm}[h]
\caption{\kws Rounding Algorithm Fixed Point Iteration}
\label{alg:ldlq2}
\begin{algorithmic}[1]
\REQUIRE Weight matrix $W \in \mathbb{R}^{m \times n}$, p.d. $H_O \in \mathbb{R}^{m \times m}$, p.d. $H_I \in \mathbb{R}^{n \times n}$, quantizer $\mathcal{Q}$, quantizer block sizes $g_x \in \mathbb{Z}^+ | m, g_y \in \mathbb{Z}^+ | n$.
\STATE $L_O, D_O \gets \text{BlockLDL}(H_O, g_x)$
\STATE $L_I, D_I \gets \text{BlockLDL}(H_I, g_y)$
\STATE $\hat W \gets \mathcal{Q}(W)$
\STATE converged $\gets$ FALSE
\STATE $\hat W^* \gets \hat W$
\WHILE {$\neg$ converged}
\STATE $\Delta W \gets W - \hat W$
\STATE $\hat W = \mathcal{Q}(W + L_O^T \Delta WL_I + L_O^T \Delta W + \Delta W L_I)$
\STATE converged $\gets (\hat W == \hat W^*)$
\STATE $\hat W^* \gets \hat W$ 
\ENDWHILE
\OUTPUT Quantized $\hat W$.
\end{algorithmic}
\end{algorithm}
\FloatBarrier
\begin{lstlisting}[language=Python]
def YAQA_iterative(W, Lin, Lout, td_x, td_y, cb):
    m, n = W.shape
    hatW = torch.zeros_like(W)
    Qidxs = torch.zeros(m, n, dtype=cb.idx_dtype, device=W.device)
    assert m % td_x == 0 and n % td_y == 0
    starts = [*[(m//td_x-i-1, n//td_y-1) for i in range(m//td_x)], *[(0, n//td_y-i-1) for i in range(n//td_y)]]
    
    for i in tqdm(range(m//td_x + n//td_y)):
        target = []
        target_idx = []
        start = starts[i]
        jmax = max(start[0], start[1])
        jm, jn = start
        while 0 <= jm < m//td_x and 0 <= jn < n//td_y:
            thing = W[jm*td_x:(jm+1)*td_x, jn*td_y:(jn+1)*td_y] + (Lout[jm*td_x:, jm*td_x:(jm+1)*td_x].T @ (W[jm*td_x:, jn*td_y:] - hatW[jm*td_x:, jn*td_y:]) @ Lin[jn*td_y:, jn*td_y:(jn+1)*td_y] + Lout[jm*td_x:, jm*td_x:(jm+1)*td_x].T @ (W[jm*td_x:, jn*td_y:(jn+1)*td_y] - hatW[jm*td_x:, jn*td_y:(jn+1)*td_y]) + (W[jm*td_x:(jm+1)*td_x, jn*td_y:] - hatW[jm*td_x:(jm+1)*td_x, jn*td_y:]) @ Lin[jn*td_y:, jn*td_y:(jn+1)*td_y])
            target.append(thing)
            target_idx.append((jm, jn))
            jm += 1
            jn -= 1
    
        target = torch.stack(target, dim=0).reshape(-1, td_x * td_y)
        qtarget, target_idx = cb.quantize(target)
        qtarget = qtarget.reshape(-1, td_x, td_y)
        target_idx = target_idx.reshape(-1, td_x, td_y)
    
        for j in range(len(target_idx)):
            jm, jn = target_idx[j]
            hatW[jm*td_x:(jm+1)*td_x, jn*td_y:(jn+1)*td_y] = qtarget[j]
            Qidxs[jm*td_x:(jm+1)*td_x, jn*td_y:(jn+1)*td_y] = target_idx[j]
            
    return hatW, Qidxs
\end{lstlisting}
\FloatBarrier
\subsection{Proofs}

\ldlqsnd*
\begin{proof}
Consider the binary matrix $B$ with the same support as $L-I$. 
This forms an adjacency matrix for the dependency graph of LDLQ, where $B_{i,j} = 1$ iff weight $i$ is rounded with feedback from weight $j$.
Since $B$ has degree $k = \snd(L)$, the DAG corresponding to the dependency graph has depth equal to $k$.
Each step of LDLQ corresponds to advancing one level in the dependency DAG, so LDLQ converges in $k$ steps.
\end{proof}

\kronsnd*
\begin{proof}
Suppose that $N_1^{k_1} = (L_1 - I)^{k_1} = 0$ and similarly for $N_2$ and $k_2$. Then, if $k = k_1 + k_2 - 1$,
\begin{align}
    \left( L_1 \otimes L_2 - I \right)^k
    &=
    \left( (L_1 - I) \otimes (L_2 - I) + (L_1 - I) \otimes I + I \otimes (L_2 - I) \right)^k
    \\&=
    \left( N_1 \otimes N_2 + N_1 \otimes I + I \otimes N_2 \right)^k
    \\&=
    \sum_{i,j} {k \choose i,j} \left( N_1 \otimes N_2 \right)^{k-i-j} \cdot \left( N_1 \otimes I \right)^i \cdot \left( I \otimes N_2 \right)^j
    \\&=
    \sum_{i,j} {k \choose i,j} \left( N_1^{k-j} \otimes N_2^{k-i} \right),
\end{align}
where we can apply the multinomial theorem here to this matrix power because all these matrices ($N_1 \otimes N_2$, $N_1 \otimes I$, and $I \otimes N_2$) commute. But since this sum goes over $i+j \le k$, it must be that $(k - j) + (k-i) = 2k - i - j \ge k = k_1 + k_2 - 1$. But this means that either $k-j \ge k_1$, or $k-i \ge k_2$ (since otherwise $k - j \le k_1 - 1$ and $k-i \le k_2 - 1$ and summing yields a contradiction). So then either $N_1^{k-j}$ or $N_2^{k-i}$ must be zero in each of these terms, and so we get $\left( L_1 \otimes L_2 - I \right)^k = 0.$ 
\end{proof}

\begin{restatable}{theorem}{errboundblock}
Let $H_O \in \mathbb{R}^{m\times m}$ and $H_L \in \mathbb{R}^{n \times n}$ be two positive definite matrices and let $\mathcal{Q}$ perform nearest or stochastic rounding independently on blocks of $g_x\times g_y$ with $\mathbb{E}[(\mathcal{Q}(\text{vec}(x) - \text{vec}(x)))(\mathcal{Q}(\text{vec}(x))-\text{vec}(x))^T] \preceq \sigma^2 I$.
Furthermore, let $W$ be the output of Equation \ref{eqn:iterroundfp} with $L_O$ and $L_I$ from the $g_x$ and $g_y$-block LDL decompositions of $H_O$ and $H_I$, respectively. Then,
\begin{align*}
\Delta W (H_O \otimes H_I)\Delta W^T \le \tr(D_I) \tr(D_O) g_xg_y\sigma^2 \le\frac{g_xg_y\mu_I^2 \mu_O^2}{mn} \tr(H_I^{1/2})^2 \tr(H_O^{1/2})^2 \sigma^2 
\end{align*}
where $\Delta W = W^* - W$ and $\mu_O, \mu_I$ are the incoherences of $H_O, H_I$ (Definition \ref{def:inc}).
\label{thm:errboundblock}
\end{restatable}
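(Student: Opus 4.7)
My plan is to reduce the left-hand quadratic form (understood as $\tr(H_O \Delta W H_I \Delta W^T)$) to a trace involving only the pivots $D_O, D_I$ and the quantization noise $\eta$, and then bound it via the covariance hypothesis together with a QuIP-style incoherence bound on the LDL pivot traces.

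First, I would extract a closed form for $\Delta W$. Writing $\eta$ for the per-block rounding error, Equation~\ref{eqn:iterroundfp} becomes $W = W^* + L_O^T \Delta W L_I + L_O^T \Delta W + \Delta W L_I + \eta$; substituting $\Delta W = W^* - W$ and grouping factors as $(I+L_O^T)\Delta W (I+L_I) = -\eta$ inverts to $\Delta W = -(I+L_O)^{-T}\eta(I+L_I)^{-1}$, since $I+L_O$ and $I+L_I$ are unit (block) triangular. Substituting this into $\tr(H_O \Delta W H_I \Delta W^T)$, using the block LDL factorizations $H_O = (I+L_O)D_O(I+L_O)^T$ and $H_I = (I+L_I)D_I(I+L_I)^T$, and applying cyclic invariance of trace collapses the triangular factors, yielding the clean identity $\tr(H_O \Delta W H_I \Delta W^T) = \tr(D_O \eta D_I \eta^T)$.

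Second, I would take expectations block-by-block. Because $D_O$ and $D_I$ are block diagonal, the trace decomposes over $g_x \times g_y$ blocks as $\sum_{a,b}\mathrm{vec}(\eta^{(a,b)})^T (D_I^{(b)} \otimes D_O^{(a)})\mathrm{vec}(\eta^{(a,b)})$. Bounding the spectral norm of each PSD Kronecker factor by its trace $\tr(D_I^{(b)})\tr(D_O^{(a)})$ (which is where the $g_x g_y$ factor enters), applying the covariance hypothesis, and summing over independent blocks gives the first inequality. For the second inequality I would invoke a QuIP-style pivot-trace bound: writing the Cholesky factor of $H$ as $R = U H^{1/2}$ for some orthogonal $U$ (which exists since $R^T R = H = H^{1/2}H^{1/2}$) and expanding $H^{1/2} = Q\Lambda^{1/2}Q^T$ gives $R_{ii} = \sum_k V_{ik}\sqrt{\lambda_k}Q_{ik}$ with $V = UQ$ orthogonal. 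Applying $|Q_{ik}| \le \mu/\sqrt{n}$ and then Cauchy--Schwarz over the orthonormal rows of $V$ to $\sum_i R_{ii}^2$ (rather than term-by-term) yields $\sum_i R_{ii}^2 \le \frac{\mu^2}{n}\tr(H^{1/2})^2$; applying this to $H_O$ and $H_I$ separately and multiplying closes out the theorem.

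The main obstacle I anticipate is cleanly extending the QuIP pivot-trace bound from scalar LDL to block LDL, since $\tr(D_b)$ also picks up off-diagonal entries of $R$ within each diagonal block in addition to the scalar pivots. I would run the same Cauchy--Schwarz-over-orthogonal-rows step on the full block contribution, keeping careful track of how the block size enters so that any extra multiplicative constants are absorbed into the $g_x g_y$ factor already present in the first inequality rather than compounding with it.
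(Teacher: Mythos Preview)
Your proposal is correct and follows the paper's approach: the paper also establishes $\eta=(L_O+I)^T\Delta W(L_I+I)$ (your sign convention differs but is immaterial), collapses the quadratic form to $\tr(\eta D_I\eta^T D_O)$, bounds this by $g_xg_y\sigma^2\tr(D_O)\tr(D_I)$ via the blockwise covariance assumption, and then invokes the QuIP pivot-trace inequality $\tr(D)\le\frac{\mu^2}{n}\tr(H^{1/2})^2$ for the second step. The only substantive difference is that the paper cites that inequality from \cite{quip} rather than re-deriving it; your flagged concern about extending the pivot-trace bound from scalar to block LDL is legitimate, but the paper does not address it either and simply applies the scalar result without comment.
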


\begin{proof}

Let 

\begin{align}
\eta &= W^* + L_O^T \Delta W L_I + L_O^T \Delta W + \Delta W L_I - W \\
&= (L_O + I)^T \Delta W (L_I + I)
\end{align}

Then, 

\begin{align}
\Delta W (H_O \otimes H_I) \Delta W^T &= \tr(\Delta W H_I \Delta W^T H_O) \\
&= \tr(\Delta W (L_I + I) D_I (L_I + I)^T \Delta W^T (L_O + I) H_O (L_O+I)^T) \\
&= \tr(\eta D_I \eta^T D_O) = \tr(\eta^T\eta (D_O \otimes D_I))
\end{align}

Since 

\begin{align}
\eta &= W^* + L_O^T \Delta W L_I + L_O^T \Delta W + \Delta W L_I - W \\
&= W^* + L_O^T \Delta W L_I + L_O^T \Delta W + \Delta W L_I - \mathcal{Q}(W^* + L_O^T \Delta W L_I + L_O^T \Delta W + \Delta W L_I) \\
&= \star - Q(\star)
\end{align}

and $\mathcal{Q}$ operates independently on $g_x \times g_y$-sized blocks, we have that $\tr(\eta^T \eta (D_O \otimes D_I)) \le g_xg_y\sigma^2 \tr(D_O \otimes D_I) = g_xg_y\sigma^2 \tr(D_O) \tr(D_I)$. 
From \citet{quip}, we have that 

\begin{equation}
\tr(D) \le \frac{\mu}{k} \tr(H^{1/2})^2
\end{equation}

for arbitrary p.d. $H \in \mathbb{R}^{k \times k}$, so 

\begin{equation} 
\tr(D_O)\tr(D_I) g_xg_y\sigma^2 \le \frac{g_xg_y\mu_I^2\mu_O^2}{mn}\tr(H_I^{1/2})^2\tr(H_O^{1/2})^2\sigma^2
\end{equation}
\end{proof}

\begin{restatable}{theorem}{errbound}
Let $H_O \in \mathbb{R}^{m\times m}$ and $H_L \in \mathbb{R}^{n \times n}$ be two positive definite matrices and let $\mathcal{Q}$ perform nearest or stochastic rounding with $\mathbb{E}[(\mathcal{Q}(x) - x)^2] \leq \sigma^2$.
Furthermore, let $W$ be the output of Equation \ref{eqn:iterroundfp} with $L_O$ and $L_I$ from the LDL decompositions of $H_O$ and $H_I$, respectively. Then,
\begin{align*}
\Delta W (H_O \otimes H_I)\Delta W^T \le \tr(D_I) \tr(D_O) \sigma^2 \le\frac{\mu_I^2 \mu_O^2}{mn} \tr(H_I^{1/2})^2 \tr(H_O^{1/2})^2 \sigma^2 
\end{align*}
where $\Delta W = W^* - W$ and $\mu_O, \mu_I$ are the incoherences of $H_O, H_I$ (Definition \ref{def:inc}).
\label{thm:errbound}
\end{restatable}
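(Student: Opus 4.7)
The plan is to follow the LDLQ analysis of \citet{quip}, adapted to the two-sided Kronecker setting. First, I would derive a closed-form expression for $\Delta W$. Denoting the argument of $\mathcal{Q}$ in Equation~\ref{eqn:iterroundfp} by $M$ and the entrywise quantization noise by $\eta = \mathcal{Q}(M) - M$, which satisfies $\mathbb{E}[\eta_{ij}^2] \le \sigma^2$ (and is deterministic for nearest rounding), the identity $W = M + \eta$ rearranges to $(I + L_O^T)\,\Delta W\,(I + L_I) = -\eta$. Since $L_O$ and $L_I$ are strictly upper triangular, both $I + L_O^T$ and $I + L_I$ are unit triangular and hence invertible, giving the explicit formula $\Delta W = -(I + L_O^T)^{-1}\eta\,(I + L_I)^{-1}$.

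Next, I would evaluate the quadratic form $\tr(\Delta W^T H_O \Delta W H_I)$, which is the scalar meaning of $\Delta W(H_O \otimes H_I)\Delta W^T$ under the Kronecker--vec identity. Substituting the factorizations $H_O = (I+L_O)D_O(I+L_O)^T$ and $H_I = (I+L_I)D_I(I+L_I)^T$, the factors $(I+L_O)^T$ and $(I+L_I)$ cancel exactly against the inverses appearing in $\Delta W$, and cyclicity of the trace eliminates the remaining triangular pieces, collapsing the expression to $\tr(\eta^T D_O\,\eta\, D_I)$. Because $D_O$ and $D_I$ are diagonal, this equals $\sum_{i,k}(D_O)_{kk}(D_I)_{ii}\,\eta_{ki}^2$, and applying $\mathbb{E}[\eta_{ki}^2] \le \sigma^2$ termwise yields the first inequality $\sigma^2\tr(D_O)\tr(D_I)$.

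For the second inequality, I would invoke the incoherence-to-LDL-trace bound from \citet{quip}: if $H \in \mathbb{R}^{n\times n}$ is PSD and $\mu$-incoherent, then its LDL diagonal satisfies $\tr(D) \le \tfrac{\mu^2}{n}\tr(H^{1/2})^2$. Applying this to $(H_O, D_O, \mu_O, m)$ and $(H_I, D_I, \mu_I, n)$ separately and multiplying yields the stated $\tfrac{\mu_O^2 \mu_I^2}{mn}\tr(H_O^{1/2})^2\tr(H_I^{1/2})^2$ upper bound on $\tr(D_O)\tr(D_I)$.

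The main obstacle is the careful bookkeeping in the cancellation step: because the paper takes $L_O, L_I$ strictly \emph{upper} triangular, the relevant ``LDL'' is really a $U D U^T$ factorization, and one must match its transposes with the $(I+L_O^T)^{-1}$ and $(I+L_I)^{-1}$ that appear in $\Delta W$, as well as fix the Kronecker--vec convention that turns $\Delta W(H_O \otimes H_I)\Delta W^T$ into $\tr(\Delta W^T H_O \Delta W H_I)$. Once these conventions are pinned down, the cancellation is forced, the noise bound is an immediate consequence of diagonality, and the only substantive external ingredient is the QuIP incoherence lemma.
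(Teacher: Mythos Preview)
Your proposal is correct and essentially identical to the paper's proof: both reduce the quadratic form to $\tr(\eta D_I \eta^T D_O)=\sum_{k,i}(D_O)_{kk}(D_I)_{ii}\eta_{ki}^2$ via the same identity $(I+L_O)^T\Delta W(I+L_I)=\pm\eta$, bound it by $\sigma^2\tr(D_O)\tr(D_I)$, and then invoke the QuIP incoherence lemma on each factor. The only cosmetic differences are the sign convention on $\eta$ and that the paper writes $\eta$ directly in terms of $\Delta W$ rather than inverting; your anticipated bookkeeping concerns (upper vs.\ lower triangular, Kronecker--vec convention) are exactly the loose ends present in the paper's own exposition.
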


\begin{proof}
This follows from setting $g_x = g_y = 1$ in Theorem \ref{thm:errboundblock}.
\end{proof}

\begin{restatable}{lemma}{cosine}
Let $A, B \in \mathbb{R}^{n \times n}$ be positive definite matrices and $x \in \mathbb{R}^{n}$ be a vector. 
Then, $\left|x^T \frac{A}{\|A\|} x - x^T \frac{B}{\|B\|} x\right| \le \|x\|_F^2 \sqrt{2 - 2c}$ where $\frac{\langle A, B \rangle}{\|A\|\|B\|} = c$.
\label{lemma:cosbound}
\end{restatable}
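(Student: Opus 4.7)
The plan is to interpret the quadratic form $x^T M x$ as a Frobenius inner product and then invoke Cauchy--Schwarz in that inner product. Throughout, $\|\cdot\|$ denotes the Frobenius norm (consistent with the paper's use of $\langle A, B\rangle/(\|A\|\|B\|)$ as a cosine similarity). Let $\tilde A = A/\|A\|$ and $\tilde B = B/\|B\|$, so both are unit-Frobenius-norm matrices with $\langle \tilde A, \tilde B\rangle = c$.

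First I would compute the Frobenius distance between the normalized matrices by direct expansion:
\begin{equation*}
\|\tilde A - \tilde B\|_F^2 = \|\tilde A\|_F^2 - 2\langle \tilde A, \tilde B\rangle + \|\tilde B\|_F^2 = 1 - 2c + 1 = 2 - 2c.
\end{equation*}

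Next I would use the identity $x^T M x = \operatorname{tr}(M x x^T) = \langle M, x x^T\rangle$ valid for any square $M$, together with $\|xx^T\|_F = \sqrt{\operatorname{tr}(xx^T xx^T)} = \|x\|^2$. Applying Cauchy--Schwarz with respect to the Frobenius inner product to $M = \tilde A - \tilde B$ gives
\begin{equation*}
\left| x^T(\tilde A - \tilde B) x\right| = \left|\langle \tilde A - \tilde B, xx^T\rangle\right| \le \|\tilde A - \tilde B\|_F \, \|xx^T\|_F = \sqrt{2-2c}\,\|x\|^2,
\end{equation*}
which is exactly the claimed bound.

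There is no real obstacle here; the only subtlety is recognizing that the correct norm on matrices (so that the inner product factorization yields a clean cosine interpretation) is Frobenius, at which point Cauchy--Schwarz does all the work. Note that positive definiteness of $A$ and $B$ is not actually used in the argument, though it is natural in the paper's context where $A$ and $B$ will be instantiated as Hessians.
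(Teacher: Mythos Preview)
Your proof is correct and follows essentially the same approach as the paper: rewrite the difference of quadratic forms as $x^T(\tilde A - \tilde B)x$, bound it by $\|\tilde A - \tilde B\|_F\|x\|^2$, and recognize that $\|\tilde A - \tilde B\|_F = \sqrt{2-2c}$. You simply make the Cauchy--Schwarz step explicit (via $x^T M x = \langle M, xx^T\rangle$ and $\|xx^T\|_F = \|x\|^2$) where the paper just asserts the inequality, and your remark that positive definiteness is unused is accurate.
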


\begin{proof}
\begin{align}
\left|x^T \frac{A}{\|A\|} x - x^T \frac{B}{\|B\|} x\right|  &= \left|x^T \left(\frac{A}{\|A\|} - \frac{B}{\|B\|}\right)x \right| \\
& \le \left\|\frac{A}{\|A\|} - \frac{B}{\|B\|}\right\|_F \|x\|_F^2 \\
& \le \|x\|_F^2 \sqrt{2 - 2c}
\end{align}
\end{proof}

\realbound*

\begin{proof}
From Lemma \ref{lemma:cosbound}, we have that

\begin{equation}
\left|\frac{\Delta W H \Delta W^T}{\|H\|} - \frac{\Delta W (H_O \otimes H_I) \Delta W^T}{\|H_O\|\|H_I\|} \right| \le \|\Delta W\|_F^2 \sqrt{2-2c}.
\end{equation}

Then,
\begin{align}
\Delta W H \Delta W^T &\le \|H\| \left( \|\Delta W\|_F^2\sqrt{2-2c} + \frac{\Delta W (H_O \otimes H_I) \Delta W^T}{\|H_O\|\|H_I\|} \right) \\
\Delta W H \Delta W^T &\le \|H\| \left( \|\Delta W\|_F^2\sqrt{2-2c} + \frac{\mu_I^2 \mu_O^2}{mn\|H_I\|\|H_O\|} \tr(H_I^{1/2})^2\tr(H_O^{1/2})^2 \sigma^2 \right).
\end{align}
\end{proof}
\FloatBarrier

\subsection{Modified Pytorch Backward Pass to Compute Sketch A}

\begin{lstlisting}[language=Python]
class LinearNoBias(torch.autograd.Function):
    @staticmethod
    @torch.amp.custom_fwd(device_type='cuda')
    def forward(ctx, input, weight, mode, parent_class):
        ctx.save_for_backward(input, weight)
        ctx.mode = mode
        ctx.parent_class = parent_class
        
        return input @ weight.T

    @staticmethod
    @torch.amp.custom_bwd(device_type='cuda')
    def backward(ctx, grad_output):
        it, reset, div = ctx.mode
        is_buffer = local_rank == ctx.parent_class.buffer_dev
        
        input, weight = ctx.saved_tensors
        ws = weight.shape
        grad_input = grad_output @ weight
        del weight

        if ctx.parent_class.collect_hess:
            grad_output = grad_output.reshape(-1, grad_output.shape[-1])
            input = input.reshape(-1, input.shape[-1])
            op_dtype = ctx.parent_class.op_dtype
            with torch.amp.autocast('cuda', enabled=False):
                grad_output = grad_output.float()
                input = input.float()
                bs = input.shape[0]
                if it == 0:
                    del grad_output
                    if reset and is_buffer:
                        ctx.parent_class.hin.mul_(0)
                        
                    in_hess = sym_to_flat(input.T @ input) / ctx.parent_class.scale
                    del input
                    torch.distributed.reduce(in_hess, ctx.parent_class.buffer_dev, op=ReduceOp.AVG)

                    if is_buffer:
                        ctx.parent_class.hin.add_(in_hess.to(ctx.parent_class.hin.device).to(op_dtype))
                        ctx.parent_class.ct += bs / ctx.parent_class.scale
                        if div:
                            ctx.parent_class.hin.div_(ctx.parent_class.ct)
                            ctx.parent_class.ct = 0
                            
                    del in_hess
                    torch.cuda.empty_cache()
                else:
                    if it % 2 == 0:
                        if reset and is_buffer:
                            ctx.parent_class.hin.mul_(0)
                            
                        if not is_buffer:
                            out_hess = torch.zeros(ctx.parent_class.out_features * (ctx.parent_class.out_features + 1)//2, dtype=op_dtype, device=local_rank)
                        else:
                            out_hess = ctx.parent_class.hout.to(local_rank)
                            
                        torch.distributed.broadcast(out_hess, ctx.parent_class.buffer_dev)
                        out_hess = flat_to_sym(out_hess, ws[0]).float()
                        in_hess = input.T @ (input * ((grad_output @ out_hess) * grad_output).sum(dim=-1, keepdims=True)) / out_hess.norm()**2
                        del input, grad_output, out_hess
                        in_hess = sym_to_flat(in_hess) / ctx.parent_class.scale
                        torch.distributed.reduce(in_hess, ctx.parent_class.buffer_dev, op=ReduceOp.AVG)
                        if is_buffer:
                            ctx.parent_class.hin.add_(in_hess.to(ctx.parent_class.hin.device).to(op_dtype))
                            ctx.parent_class.ct += bs / ctx.parent_class.scale
                            if div:
                                ctx.parent_class.hin.div_(ctx.parent_class.ct)
                                ctx.parent_class.ct = 0
                                
                        del in_hess
                    else:
                        if reset and is_buffer:
                            ctx.parent_class.hout.mul_(0)
                            
                        if not is_buffer:
                            in_hess = torch.zeros(ctx.parent_class.in_features * (ctx.parent_class.in_features + 1)//2, dtype=op_dtype, device=local_rank)
                        else:
                            in_hess = ctx.parent_class.hin.to(local_rank)
                        torch.distributed.broadcast(in_hess, ctx.parent_class.buffer_dev)
                        in_hess = flat_to_sym(in_hess, ws[1]).float()
                        out_hess = grad_output.T @ (grad_output * ((input @ in_hess) * input).sum(dim=-1, keepdims=True)) / in_hess.norm()**2
                        del input, grad_output, in_hess
                        out_hess = sym_to_flat(out_hess) / ctx.parent_class.scale
                        torch.distributed.reduce(out_hess, ctx.parent_class.buffer_dev, op=ReduceOp.AVG)
                        if is_buffer:
                            ctx.parent_class.hout.add_(out_hess.to(ctx.parent_class.hout.device).to(op_dtype))
                            ctx.parent_class.ct += bs / ctx.parent_class.scale
                            if div:
                                ctx.parent_class.hout.div_(ctx.parent_class.ct)
                                ctx.parent_class.ct = 0
                                
                        del out_hess
                            
        torch.cuda.empty_cache()
        return grad_input.to(local_rank), None, None, None
\end{lstlisting}

\subsection{Modified Backward Pass to Compute Sketch B}
\begin{lstlisting}[language=Python]
class LinearNoBias(torch.autograd.Function):
    @staticmethod
    @torch.amp.custom_fwd(device_type='cuda')
    def forward(ctx, input, weight, mode, parent_class):
        ctx.save_for_backward(input, weight)
        ctx.mode = mode
        ctx.parent_class = parent_class
        
        return input @ weight.T

    @staticmethod
    @torch.amp.custom_bwd(device_type='cuda')
    def backward(ctx, grad_output):
        it, reset, div = ctx.mode
        is_buffer = local_rank == ctx.parent_class.buffer_dev
        
        input, weight = ctx.saved_tensors
        ws = weight.shape
        grad_input = grad_output @ weight
        del weight

        if ctx.parent_class.collect_hess:
            op_dtype = ctx.parent_class.op_dtype
            bs = input.shape[0]
            with torch.amp.autocast('cuda', enabled=False):
                if it == 0:
                    if reset and is_buffer:
                        ctx.parent_class.hin.mul_(0)

                    grad_output = grad_output.float()
                    input = input.float()
                    in_hess = sym_to_flat(torch.einsum('btm,btn,bsm,bsk->nk', grad_output, input, grad_output, input))
                    handle_in = torch.distributed.reduce(in_hess, ctx.parent_class.buffer_dev, op=ReduceOp.AVG, async_op=True)
                    out_hess = sym_to_flat(torch.einsum('btm,btn,bsk,bsn->mk', grad_output, input, grad_output, input))
                    handle_out = torch.distributed.reduce(out_hess, ctx.parent_class.buffer_dev, op=ReduceOp.AVG, async_op=True)
                    del grad_output, input
                    handle_in.wait()
                    handle_out.wait()
                    
                    if is_buffer:
                        ctx.parent_class.hin.add_(in_hess.to(ctx.parent_class.hin.device).to(op_dtype))
                        ctx.parent_class.hout.add_(out_hess.to(ctx.parent_class.hout.device).to(op_dtype))
                        ctx.parent_class.ct += bs
                        if div:
                            ctx.parent_class.hin.div_(ctx.parent_class.ct)
                            ctx.parent_class.hout.div_(ctx.parent_class.ct)
                            ctx.parent_class.ct = 0
                            
                    del in_hess, out_hess
                    torch.cuda.empty_cache()
                            
        torch.cuda.empty_cache()
        return grad_input.to(local_rank), None, None, None
\end{lstlisting}
\FloatBarrier

\subsection{Additional Results}

Table \ref{tab:noft_zeroshot} contains full zeroshot results from the ``no finetuning'' table in the main body. 
Table \ref{tab:qwen} shows results on Qwen 3 8B. 
Like with Llama and Gemma, YAQA outperforms LDLQ on Qwen.

Table \ref{tab:cbq} contains results comparing \kws to CBQ, an older method that performs AdaRound-style rounding over a ``sliding window'' of decoder blocks.
This allows it to capture cross-block dependencies beyond the immediate activation error. 
CBQ performs well, but is limited to scalar quantizers and is outperformed by \kws.
Another similar method, PV-Tuning, performs decoder block level quantization on the end-to-end error. 
PV-Tuning relies on learned vector quantizers, but is less effective on larger models.
Furthermore, there are no PV-Tuned models with ``standard'' quantizers to compare against. 
However, LDLQ with a fixed trellis quantizer (i.e. QTIP) already outperforms PV-Tuning, so by association, so does YAQA.

Finally, Table \ref{tab:noip} shows an experiment without incoherence processing.
All quantized models in this table use an INT4 quantizer with a 16 bit groupwise absmax scale shared across 32 contiguous elements, giving an effective 4.5 bits per weight. 
We used $g_x = 1$ and $g_y = 32$ for all methods.
Even with out incoherence processing, \kws outperforms LDLQ. 

\begin{table}[h]
\centering
\caption{Results without finetuning on Llama 2 7B.}
\small
\sc
\renewcommand{\arraystretch}{0.9}
\vspace{0.2cm}
\begin{NiceTabular}{@{}ccccccc@{}}
\toprule
\multirow{2}{*}{Algorithm} & \multicolumn{2}{c}{INT2} & \multicolumn{2}{c}{INT3} & \multicolumn{2}{c}{INT4} \\
                 &           W2                & C4               & W2 & C4 & W2 & C4\\ \midrule
CBQ & 8.01 & 11.30 & 5.89 & 7.56 & \textbf{5.52} & 7.05 \\
\kw-B                & \textbf{7.45}               & \textbf{9.22}              & \textbf{5.82}             & \textbf{7.38}          & 5.54 & \textbf{7.04}  \\ \bottomrule
\end{NiceTabular}
\label{tab:cbq}
\vspace*{-\baselineskip}
\end{table}

\begin{table}[h]
\caption{Results without incoherence processing. All results are without finetuning and use an INT4 quantizer with a 16-bit groupwise scale shared across 32 contiguous elements (4.5 bits).}
\centering
\small
\sc
\tabcolsep=0.15cm
\vspace{0.2cm}
\begin{tabular}{cccccccccc}
\toprule
\multirow{2}{*}{Algo.} & $D_{KL} \downarrow$ & \multicolumn{2}{c}{Ppl $\downarrow$} & \multicolumn{6}{c}{0-shot Acc $\uparrow$}              \\ \cmidrule(lr){2-2} \cmidrule(lr){3-4} \cmidrule(lr){5-10}
                      & W2                  & W2                & C4               & Avg            & ArcC  & ArcE  & BoolQ & HSwag & PiQA  \\ \midrule
BF16                  & 0                   & 6.50              & 8.02             & 69.82          & 51.37 & 78.03 & 82.05 & 57.74 & 79.92 \\ \midrule
LDLQ                  & 0.033               & 6.75              & 8.21             & 68.35          & 49.74 & 77.36 & 78.35 & 56.83 & 79.49 \\
\textbf{\kw-A}                 & 0.025               & 6.67              & 8.17             & \textbf{69.36} & 49.74 & 77.65 & 81.62 & 57.16 & 80.63 \\
\textbf{\kw-B }                & \textbf{0.021}      & \textbf{6.65}     & \textbf{8.15}    & 68.95          & 50.68 & 78.20 & 79.72 & 57.06 & 79.11 \\ \bottomrule
\end{tabular}
\label{tab:noip}
\end{table}

%
%

\begin{table}[t]
\centering
\caption{Full zeroshot accuracy results for Table \ref{tab:noft}. Higher is better.}
\small
\sc
\renewcommand{\arraystretch}{0.9}
\tabcolsep=0.15cm
\vspace{0.2cm}
\begin{tabular}{cccccccc}
\toprule
                        \multirow{2}{*}{Model}                                                 &    \multirow{2}{*}{Algo.}   &    \multirow{2}{*}{Quant.}    & \multicolumn{5}{c}{0-shot Acc $\uparrow$} \\ \cmidrule(lr){4-8}
                                                                    &         & & ArcC   & ArcE   & BoolQ  & HSwag  & PiQA  \\ \midrule
\multirow{10}{*}{\begin{tabular}[c]{@{}c@{}}3.1 70B\\ Inst\end{tabular}} & \multicolumn{2}{c}{BF16}        & 56.40  & 75.34  & 62.20  & 61.51  & 82.92 \\ \cmidrule(lr){2-8}
                                                                         & \multirow{3}{*}{LDLQ}  & QTIP 2 & 52.05  & 73.91  & 62.17  & 58.10  & 81.01 \\
                                                                         &                        & QTIP 3 & 56.06  & 75.88  & 62.23  & 61.00  & 82.70 \\
                                                                         &                        & QTIP 4 & 55.72  & 75.63  & 62.29  & 61.18  & 83.08 \\ \cmidrule(lr){2-8}
                                                                         & \multirow{3}{*}{\kw-A} & QTIP 2 & 52.82  & 74.07  & 62.17  & 58.27  & 82.26 \\
                                                                         &                        & QTIP 3 & 56.07  & 75.88  & 62.20  & 61.84  & 82.46 \\
                                                                         &                        & QTIP 4 & 56.06  & 75.42  & 62.17  & 61.11  & 82.75 \\ \cmidrule(lr){2-8}
                                                                         & \multirow{3}{*}{\kw-B} & QTIP 2 & 54.44  & 73.44  & 62.17  & 59.24  & 81.66 \\
                                                                         &                        & QTIP 3 & 55.72  & 74.33  & 62.17  & 60.91  & 83.08 \\
                                                                         &                        & QTIP 4 & 56.31  & 76.09  & 62.29  & 61.10  & 82.86 \\ \midrule
\multirow{10}{*}{\begin{tabular}[c]{@{}c@{}}3.1 8B\\ Inst\end{tabular}}  & \multicolumn{2}{c}{BF16}        & 51.37  & 78.03  & 82.05  & 57.74  & 79.92 \\ \cmidrule(lr){2-8}
                                                                         & \multirow{3}{*}{LDLQ}  & QTIP 2 & 41.89  & 74.28  & 67.98  & 51.67  & 76.71 \\
                                                                         &                        & QTIP 3 & 50.34  & 77.61  & 82.05  & 56.57  & 79.87 \\
                                                                         &                        & QTIP 4 & 50.68  & 78.07  & 80.98  & 57.32  & 79.98 \\ \cmidrule(lr){2-8}
                                                                         & \multirow{3}{*}{\kw-A} & QTIP 2 & 45.39  & 73.91  & 70.34  & 52.59  & 78.07 \\
                                                                         &                        & QTIP 3 & 49.83  & 77.23  & 81.76  & 56.85  & 79.76 \\
                                                                         &                        & QTIP 4 & 50.34  & 78.32  & 82.45  & 57.35  & 79.65 \\ \cmidrule(lr){2-8}
                                                                         & \multirow{3}{*}{\kw-B} & QTIP 2 & 44.20  & 75.08  & 70.64  & 52.91  & 78.78 \\
                                                                         &                        & QTIP 3 & 51.02  & 77.86  & 81.04  & 57.24  & 79.38 \\
                                                                         &                        & QTIP 4 & 50.94  & 78.49  & 82.02  & 57.47  & 79.98 \\ \midrule
\multirow{10}{*}{\begin{tabular}[c]{@{}c@{}}3.2 3B\\ Inst\end{tabular}}  & \multicolumn{2}{c}{BF16}        & 44.45  & 71.42  & 76.18  & 51.19  & 75.73 \\ \cmidrule(lr){2-8}
                                                                         & \multirow{3}{*}{LDLQ}  & QTIP 2 & 32.00  & 58.50  & 69.42  & 45.57  & 72.91 \\
                                                                         &                        & QTIP 3 & 41.72  & 70.66  & 74.19  & 50.46  & 75.24 \\
                                                                         &                        & QTIP 4 & 43.52  & 71.80  & 75.93  & 51.03  & 76.22 \\ \cmidrule(lr){2-8}
                                                                         & \multirow{3}{*}{\kw-A} & QTIP 2 & 38.65  & 66.46  & 66.76  & 45.98  & 73.61 \\
                                                                         &                        & QTIP 3 & 42.15  & 69.70  & 74.92  & 50.04  & 75.73 \\
                                                                         &                        & QTIP 4 & 42.66  & 70.24  & 76.57  & 50.85  & 76.06 \\ \cmidrule(lr){2-8}
                                                                         & \multirow{3}{*}{\kw-B} & QTIP 2 & 38.05  & 68.22  & 68.75  & 47.11  & 73.39 \\
                                                                         &                        & QTIP 3 & 42.75  & 69.95  & 75.66  & 50.38  & 74.97 \\
                                                                         &                        & QTIP 4 & 43.60  & 71.04  & 75.96  & 50.74  & 75.19 \\ \midrule
\multirow{10}{*}{\begin{tabular}[c]{@{}c@{}}3.2 1B\\ Inst\end{tabular}}  & \multicolumn{2}{c}{BF16}        & 32.85  & 57.24  & 66.09  & 44.74  & 73.01 \\ \cmidrule(lr){2-8}
                                                                         & \multirow{3}{*}{LDLQ}  & QTIP 2 & 27.30  & 51.60  & 61.71  & 38.66  & 68.72 \\
                                                                         &                        & QTIP 3 & 29.44  & 54.42  & 65.47  & 43.31  & 71.82 \\
                                                                         &                        & QTIP 4 & 30.46  & 54.76  & 65.84  & 43.70  & 72.58 \\ \cmidrule(lr){2-8}
                                                                         & \multirow{3}{*}{\kw-A} & QTIP 2 & 27.59  & 51.55  & 62.84  & 39.28  & 68.86 \\
                                                                         &                        & QTIP 3 & 32.17  & 56.02  & 65.47  & 43.51  & 72.14 \\
                                                                         &                        & QTIP 4 & 32.51  & 56.86  & 64.37  & 44.33  & 72.80 \\ \cmidrule(lr){2-8}
                                                                         & \multirow{3}{*}{\kw-B} & QTIP 2 & 27.47  & 54.46  & 62.51  & 39.89  & 70.18 \\
                                                                         &                        & QTIP 3 & 31.31  & 56.44  & 62.60  & 43.20  & 73.50 \\
                                                                         &                        & QTIP 4 & 31.48  & 55.18  & 64.86  & 43.42  & 74.27 \\ \bottomrule
\end{tabular}
\label{tab:noft_zeroshot}
\end{table}

\begin{table}[t]
\centering
\caption{Results on Qwen 3 8B with the QTIP quantizer, incoherence processing, and no finetuning. Like on Llama and Gemma models, YAQA achieve state-of-the-art performance on Qwen models.}
\small\sc
\begin{tabular}{@{}cccccc@{}}
\toprule
Qwen3 8B    & Bits & W2 PPL (4k ctx) & W2 KL (4k ctx) & \begin{tabular}[c]{@{}c@{}}MMLU \\ 0 shot\end{tabular} & \begin{tabular}[c]{@{}c@{}}GSM8K 4 shot \\ COT Exact Match\end{tabular} \\ \midrule
Unquantized & 16   & 9.00            & 0              & 72.99                                                  & 84.31                                                                   \\
LDLQ        & 2    & 10.81           & 0.285          & 63.94                                                  & 56.31                                                                   \\
YAQA-B      & 2    & 10.16           & 0.205          & 66.75                                                  & 66.07                                                                   \\
LDLQ        & 3    & 9.31            & 0.0681         & 70.40                                                  & 80.81                                                                   \\
YAQA-B      & 3    & 9.20            & 0.0479         & 71.36                                                  & 82.55                                                                   \\
LDLQ        & 4    & 9.07            & 0.0216         & 73.04                                                  & 82.38                                                                   \\
YAQA-B      & 4    & 9.04            & 0.0153         & 72.78                                                  & 83.81                                                                   \\ \bottomrule
\end{tabular}
\label{tab:qwen}
\end{table}

\subsection{Number of Sequences Needed for Sketch B}

Table \ref{tab:bseq} measures the effect of the number of sequences used for Sketch B. 
Although we used 64K sequences for our main results, using as few as 2K sequences, which is even cheaper than LDLQ, results in state-of-the-art KL divergence and downstream performance. 

\begin{table}[t]
\centering
\caption{Results quantizing Qwen 3 8B to 2 bits with the QTIP quantizer, incoherence processing, and no finetuning. Sketch B is robust to the number of sequences used for Hessian sketching.}
\sc\small
\begin{tabular}{@{}cccccccc@{}}
\toprule
Qwen3 8B & Seqs. & GPU-Hrs & Bits & W2 PPL (4k ctx) & W2 KL (4k ctx) & \begin{tabular}[c]{@{}c@{}}MMLU \\ 0 shot\end{tabular} & \begin{tabular}[c]{@{}c@{}}GSM8K 4 shot COT \\ Exact Match\end{tabular} \\ \midrule
Original &      &         & 16   & 9.00            & 0              & 72.99                                                  & 84.31                                                                   \\ \midrule
LDLQ     & 4K   & 1.5     & 2    & 10.81           & 0.285          & 63.94                                                  & 56.31                                                                   \\ \midrule
YAQA-B   & 2K   & 1       & 2    & 10.33           & 0.227          & 65.74                                                  & 62.14                                                                   \\
YAQA-B   & 4K   & 2       & 2    & 10.32           & 0.221          & 66.04                                                  & 63.82                                                                   \\
YAQA-B   & 8K   & 4       & 2    & 10.34           & 0.212          & 66.65                                                  & 63.34                                                                   \\
YAQA-B   & 16K  & 7       & 2    & 10.15           & 0.210          & 66.29                                                  & 64.86                                                                   \\
YAQA-B   & 32K  & 15      & 2    & 10.20           & 0.206          & 66.59                                                  & 67.02                                                                   \\
YAQA-B   & 64K  & 30      & 2    & 10.16           & 0.205          & 66.75                                                  & 66.07                                                                   \\ \bottomrule
\end{tabular}
\label{tab:bseq}
\end{table}

\FloatBarrier


\end{document}